
\documentclass[10pt,journal,compsoc]{IEEEtran}
%


%

%
\ifCLASSOPTIONcompsoc
  \usepackage{cite}
\else
  \usepackage{cite}
\fi
%

%
\ifCLASSINFOpdf
\else
\fi
%
%

%
\usepackage{amsmath}
%

%
\usepackage{algorithmic}
\hyphenation{op-tical net-works semi-conduc-tor}

\usepackage{graphicx}
\usepackage{subfigure}
\usepackage{booktabs} 
\usepackage{amsthm}
\usepackage{amssymb}
\usepackage{bm}
\usepackage{newtxmath}
\usepackage{xcolor}
\usepackage{xspace}
\usepackage{bbm}
\usepackage{algorithm}
\usepackage{algorithmic}
\usepackage[nolist]{acronym}
\usepackage{hyperref}
\usepackage[shortlabels]{enumitem}

\hypersetup{
    colorlinks=false,
    linkcolor=black,
    filecolor=black,      
    urlcolor=blue,
    }

\urlstyle{same}


\usepackage{amsmath,amsfonts,bm}









\def\eqref#1{equation~\ref{#1}}
\def\Eqref#1{Equation~\ref{#1}}








\def\1{\bm{1}}








\def\vmu{{\bm{\mu}}}
\def\vtheta{{\bm{\theta}}}
\def\va{{\bm{a}}}

\def\vo{{\bm{o}}}

\def\vu{{\bm{u}}}

\def\vw{{\bm{w}}}
\def\vx{{\bm{x}}}
\def\vy{{\bm{y}}}
\def\vz{{\bm{z}}}



\def\mB{{\bm{B}}}

\def\mL{{\bm{L}}}

\DeclareMathAlphabet{\mathsfit}{\encodingdefault}{\sfdefault}{m}{sl}
\SetMathAlphabet{\mathsfit}{bold}{\encodingdefault}{\sfdefault}{bx}{n}













\DeclareMathOperator*{\argmax}{arg\,max}
\DeclareMathOperator*{\argmin}{arg\,min}

\DeclareMathOperator{\sign}{sign}

\theoremstyle{plain}
\newtheorem{theorem}{Theorem}
\newtheorem*{theorem*}{Theorem}
\newenvironment{hproof}{\proof}{\endproof}
\def\vxi{{\bm{\xi}}}
\def\vmu{{\bm{\mu}}}
\def\vnu{{\bm{\nu}}}
\def\vDelta{{\bm{\Delta}}}
\def\vlambda{{\bm{\lambda}}}
\DeclareMathOperator{\relu}{relu}

\begin{document}
%
\title{Continuous-Time Fitted Value Iteration\\ for Robust Policies}
%
%
%
%

\author{Michael~Lutter, Boris Belousov, 
Shie~Mannor,
Dieter~Fox,
Animesh~Garg, 
and~Jan~Peters
\IEEEcompsocitemizethanks{
\IEEEcompsocthanksitem M. Lutter is with the Intelligent Autonomous Systems group with the Computer Science Department, Technical University of Darmstadt, Darmstadt, Germany. 
E-mail: michael@robot-learning.de
\IEEEcompsocthanksitem B. Belousov is with the Intelligent Autonomous Systems group with the Computer Science Department, Technical University of Darmstadt, Darmstadt, Germany. 
\IEEEcompsocthanksitem S. Mannor is a professor in the The Andrew and Erna Viterbi Faculty of Electrical \& Computer Engineering at the Technion – Israel Institute of Technology and a Distinguished Scientist at NVIDIA, Tel Aviv, Israel.
\IEEEcompsocthanksitem D. Fox is a professor in the Allen School of Computer Science \& Engineering at the University of Washington and Senior Director of Robotics Research at NVIDIA, Seattle, Washington, USA. 
\IEEEcompsocthanksitem A. Garg is a professor in the Computer Science Department at the University of Toronto, Member of the Vector Institute and Senior Scientist at NVIDIA, Toronto, Canada. 
\IEEEcompsocthanksitem J. Peters is a professor at the Intelligent Autonomous Systems group with the Computer Science Department, Technical University of Darmstadt, Darmstadt, Germany. 
}
\thanks{
}}

%
%

\markboth{
}%
{Lutter \MakeLowercase{\textit{et al.}}: Continuous-Time Value Iteration for Robust Policies}
%



\IEEEtitleabstractindextext{%
\begin{abstract}
Solving the Hamilton-Jacobi-Bellman equation is important in many domains including control, robotics and economics. Especially for continuous control, solving this differential equation and its extension the Hamilton-Jacobi-Isaacs equation, is important as it yields the optimal policy that achieves the maximum reward on a give task. In the case of the Hamilton-Jacobi-Isaacs equation, which includes an adversary controlling the environment and minimizing the reward, the obtained policy is also robust to perturbations of the dynamics. 
%
In this paper we propose continuous fitted value iteration (cFVI) and robust fitted value iteration (rFVI). These algorithms leverage the non-linear control-affine dynamics and separable state and action reward of many continuous control problems to derive the optimal policy and optimal adversary in closed form. This analytic expression simplifies the differential equations and enables us to solve for the optimal value function using value iteration for continuous actions and states as well as the adversarial case. Notably, the resulting algorithms do not require discretization of states or actions.
%
We apply the resulting algorithms to the Furuta pendulum and cartpole. We show that both algorithms obtain the optimal policy. The robustness Sim2Real experiments on the physical systems show that the policies successfully achieve the task in the real-world. When changing the masses of the pendulum, we observe that robust value iteration is more robust compared to deep reinforcement learning algorithm and the non-robust version of the algorithm. Videos of the experiments are shown at \url{https://sites.google.com/view/rfvi}
\end{abstract}

\begin{IEEEkeywords}
Value Iteration, Continuous Control, Dynamic Programming. Adversarial Reinforcement Learning.
\end{IEEEkeywords}}

\maketitle

\IEEEdisplaynontitleabstractindextext

%
\IEEEpeerreviewmaketitle

\begin{acronym}
\acro{hjb}[HJB]{Hamilton-Jacobi-Bellman}
\acro{hji}[HJI]{Hamilton-Jacobi-Isaacs}
\acro{sim2real}[Sim2Real]{simulation to real}
\acro{rl}[RL]{reinforcement learning}
\acro{cfvi}[cFVI]{Continuous Fitted Value Iteration}
\acro{rfvi}[rFVI]{Robust Fitted Value Iteration}
\acro{mlp}[MLP]{Multi-Layer Perceptron}
\end{acronym}

\IEEEraisesectionheading{\section{Introduction}\label{sec:introduction}}

%
%
%
%
\IEEEPARstart{O}{ne} approach to obtain the optimal control inputs that maximize the reward, is to solve the \ac{hjb} equation, as this differential equation expresses a sufficient and necessary condition for optimality~\cite{liberzon2011calculus}. Solving the \ac{hjb} yields the optimal value function, which can be used to retrieve the optimal action at each state. Therefore, this ansatz has been used by various research communities, including economics~\cite{carter2001foundations, caputo2005foundations} and robotics~\cite{lavalle2006planning, chen2015safe, bansal2017hamilton, fisac2018general}, to compute the optimal plan for a given reward function. For example in robotics, the optimal action sequence to navigate a robot to a goal from any starting state with the least actions can be obtained by solving the \ac{hjb}. Classical approaches solve this differential equation using PDE solvers on a discretized grid~\cite{chen2015safe, bansal2017hamilton, fisac2018general}. Instead of using a grid, various researchers have proposed to use the machine-learning toolset of black-box function approximation and regression techniques to solve the \ac{hjb} using randomly sampled data~\cite{doya2000reinforcement, morimoto2005robust, tassa2007least, lutter2019hjb, yang2014reinforcement, liu2014neural, kim2020L4DC, kim2020hamilton}.  

\medskip\noindent
In this article, we follow this line of research and present an approach to solve the \ac{hjb} and the adversarial extension of the \ac{hjb} using value iteration. This approach unifies the derivation of our previously proposed algorithms \ac{cfvi}~\cite{lutter2021cfvi} and \ac{rfvi}~\cite{lutter2021robust}. \ac{cfvi} is a value iteration based algorithm that solves the \ac{hjb} for continuous states and actions. This approach leverages our insights to obtain the optimal policy in closed form for control-affine dynamics and separable rewards. This analytic policy enables us to solve this differential equation using fitted value iteration with a deep network as value function approximation. Previously this would not have been possible for continuous actions as solving for the optimal action at each step would not be computationally feasible. \ac{rfvi} is similar to \ac{cfvi} but instead of solving the \ac{hjb}, this algorithms solves the \ac{hji}. The \ac{hji} incorporates an additional adversary that tries to minimize the reward. Therefore, the obtained policy and value function are robust to perturbations of the adversary. For this extension, we show that also the optimal perturbation of the adversary can be computed in closed form. Therefore, \ac{rfvi} obtains the robust optimal policy while \ac{cfvi} only obtains the optimal policy. 

\medskip\noindent
We apply the resulting algorithms to standard continuous control problems. In this setting, the \ac{hji} is of special interest as the adversary is used to control the parameters of the environment to minimize the reward~\cite{chen2015safe, bansal2017hamilton, fisac2018general}. Therefore, this min-max formulation optimizes the worst-case reward. This worst-case optimization yields an optimal policy that is robust to changes in the environment because the worst-case is assumed during planning. We show that the proposed approaches can obtain the optimal control policy and can be transferred to the physical system. By changing the masses of the dynamical systems during the \ac{sim2real} transfer experiments, we show that the \ac{rfvi} policy is more robust compared to the baselines. 

\medskip \noindent
\textbf{Summary of Contributions.} In this paper, we show that \ac{cfvi} and \ac{rfvi} obtain the optimal control policy and can be successfully transferred to a physical system. To derive these algorithms and highlight their performance,
\begin{enumerate} [wide=0pt]
\item we extend the existing derivations~\cite{lyshevski1998optimal, doya2000reinforcement, morimoto2005robust} of the optimal policy to a wider class of reward functions and adversaries,
\vspace{0.35em}
\item we propose to use value iteration to solve the differential equations as this optimization is more robust compared to the approaches using regression~\cite{doya2000reinforcement, tassa2007least, lutter2019hjb},
\vspace{0.35em}
\item we provide an extensive experimental evaluation that compares qualitative and quantitative performance evaluates the policies on the physical system using Sim2Real. Furthermore, we provide ablation studies highlighting the impact of the individual hyper-parameters. 
\end{enumerate}

\medskip \noindent
\textbf{Outline.} The article is structured as follows. First, we introduce the problem statement (Section \ref{sec:problem}). Afterwards, we derive the analytic optimal policy (Section \ref{sec:opt_policy}), extend the approach to action constraints (Section \ref{sec:action_constraints}) and derive the optimal adversary (Section \ref{sec:opt_adversary}). Section \ref{sec:value_iteration} introduces the value iteration to compute the optimal value function and the following Section describes the used value function representation (Section \ref{sec:value_function}). The experiments are summarized in Section \ref{sec:experiments}. The following conclusion (Section~\ref{sec:conclusion}) describes the observed limitations and potential future work, relates the proposed algorithm to the existing literature and summarizes the contributions of the paper.

\section{Problem Statement}\label{sec:problem}
We focus on solving the \acl{hjb} and \acl{hji} differential equation. These equations can be derived using the continuous-time \ac{rl} problem and the corresponding adversarial extension. In the following, we first introduce the continuous-time \ac{rl} problem and extend it to the adversarial formulation afterward.

\medskip \noindent \textbf{Reinforcement Learning.}
The infinite horizon continuous time reinforcement learning problem is described by
\begin{gather}
\pi^*(\vx_0) = \argmax_{\vu} \int_{0}^{\infty} \exp(-\rho t) \:\: r_{c}(\vx_t, \vu_t) \: dt,  \label{eq:cont_policy} \\ 
V^{*}(\vx_0) = \max_{\vu} \int_{0}^{\infty} \exp(-\rho t) \: r_c(\vx_{t}, \vu_{t}) \:  dt,  \label{eq:cont_val} \\
\text{with} \hspace{10pt} \vx(t) = \vx_0 + \int_{0}^{t} f_c(\vx_{\tau}, \vu_{\tau}) \: d\tau 
\end{gather}
with the discounting factor $\rho \in (0, \infty]$, the reward $r_c$ and the dynamics $f_c$ \cite{kirk2004optimal}. Notably, the discrete-time reward and discounting can be described using the continuous-time counterparts, i.e., $r(\vx, \vu) = \Delta t \: r_c(\vx, \vu)$ and $\gamma = \exp(-\rho \: \Delta t)$ with the sampling interval $\Delta t$. The continuous-time discounting $\rho$ is, in contrast to the discrete discounting factor $\gamma$, agnostic to sampling frequencies. In the continuous-time case, the Q-function does not exist \cite{doya2000reinforcement}. It is important to note that the optimization of \Eqref{eq:cont_val} is unconstrained w.r.t. to the actions. Action constraints will be implicitly introduced using the action cost in Section \ref{sec:action_constraints}.

\medskip \noindent
\Eqref{eq:cont_val} can be rewritten to yield the \ac{hjb} differential equation, which is the continuous time counterpart of the discrete Bellmann equation. Substituting the value function at time $t' = t + \Delta t$, approximating $V^{*}(x(t'), \: t')$ with its 1st order Taylor expansion and taking the limit $\Delta t \rightarrow 0$ yields the HJB described by
\begin{align}
\rho \: V^{*}(\vx) &= \max_{\vu} \:\: r(\vx, \vu) + f_c(\vx, \vu)^{\top} \: \frac{\partial V^{*}}{\partial \vx}. \label{eq:hjb} 
\end{align}
In the following, we will abbreviate $\partial V^{*}/\partial \vx$ as $\nabla_{\vx} V^{*}$. The full derivation of the HJB can be found within \cite{doya2000reinforcement}. 

\medskip \noindent 
The reward is assumed to be separable into a non-linear state reward $q_c$ and the action cost $g_c$ described by
\begin{align}
    r_c(\vx, \vu) = q_c(\vx) - g_c(\vu). \label{eq:seperable_rwd}
\end{align}
The action penalty $g_{c}$ is non-linear, positive definite, and strictly convex. This separability is common for robot control problems as rewards are composed of a state component quantifying the distance to the desired state 
and an action penalty. The action cost penalizes non-zero actions to avoid bang-bang control from being optimal and is convex to have a unique optimal action.

\medskip \noindent
The deterministic continuous-time dynamics model $f_c$ is assumed to be non-linear w.r.t. the system state $\vx$ but affine w.r.t. the action~$\vu$. Such dynamics model is described by
\begin{align}
\dot{\vx} = \va(\vx; \theta) + \mB(\vx; \theta) \vu \label{eq:affine_dyn} 
\end{align}
with the non-linear drift $\va$, the non-linear control matrix $\mB$ and the system parameters $\theta$. Robot dynamics models are naturally expressed in the continuous-time formulation and many are control affine. Furthermore, this special case has received ample attention in the existing literature due to its wide applicability \cite{doya2000reinforcement, kappen2005linear, todorov2007linearly}. 

\medskip \noindent \textbf{Adversarial Reinforcement Learning.} 
The adversarial approach incorporates an adversary that controls the environment and tries to minimize the obtained reward of the policy. This formulation resembles an zero-sum two-player game, where the policy maximizes the reward and the adversary minimizes the reward. Therefore, one optimizes the worst case reward and not the expected reward. The worst-case formulation is commonly used within robust control to obtain a policy that is robust to changes in the environments. The corresponding optimization problems of \Eqref{eq:cont_policy} and \Eqref{eq:cont_val} are described by
\begin{gather}
\pi^*(\vx) = \argmax_{\pi} \: \inf_{\bm{\xi} \in \Omega} \: \int_{0}^{\infty} \exp(-\rho t) \:\: r_{c}(\vx_t, \vu_t) \: dt,
\label{eq:adv_policy} \\ 
V^{*}(\vx) = \max_{\vu} \: \inf_{\bm{\xi} \in \Omega} \: \int_{0}^{\infty} \exp(-\rho t) \: r_c(\vx_{t}, \vu_{t}) \:  dt,
\label{eq:adv_val}
\end{gather}
with the adversary $\vxi$, admissible set $\Omega$. The order of the optimizations can be switched as the optimal actions and disturbance remain identical \cite{isaacs1999differential}. The adversary $\bm{\xi}$ actions are constrained to be in the set of admissible disturbances $\Omega$ as otherwise the adversary is too powerful and would prevent the policy from learning the task. Similar to the HJB, \Eqref{eq:adv_val} can be rewritten to yield the Hamilton-Jacobi-Isaacs (HJI) differential equation. The HJI is described by
\begin{align}
\rho \: V^{*}(\vx) &= \max_{\vu} \inf_{\vxi \in \Omega} \:\: r(\vx, \vu) \:\: + f_c(\vx, \vu, \vxi)^{\top} \: \nabla_{\vx} V^{*}. \label{eq:hji} 
\end{align}
The optimal policy and adversary are assumed to be stationary and Markovian. In this case, the worst-case action is deterministic if the dynamics are deterministic. If the adversary would be stochastic, the optimal policies are non-stationary and non-Markovian \cite{zhang2020robust}. This assumption is used in most of the existing literature on adversarial RL \cite{morimoto2005robust, heger1994consideration, zhang2020robust, mandlekar2017adversarially}. 

\medskip \noindent
To obtain a policy that is robust to variations of the dynamical system and bridge the simulation to reality gap, we consider four different adversaries. These adversaries either alter (1) the state~\cite{heger1994consideration, littman1994markov, nilim2005robust}, (2) action~\cite{morimoto2005robust, pinto2017robust, pinto2017supervision, tessler2019action, pattanaik2017robust, gleave2019adversarial}, (3) observation~\cite{mandlekar2017adversarially, zhang2020robust} and (4) model parameters~\cite{mandlekar2017adversarially}. Each adversary addresses a potential cause of the simulation gap. The state adversary $\vxi_{x}$ incorporates unmodeled physical phenomena in the simulation. The action adversary $\vxi_{u}$ addresses the non-ideal actuators. The observation adversary $\vxi_{o}$ introduces the non-ideal observations caused by sensors. The model adversary $\vxi_{\theta}$ introduces a bias to the system parameters. All adversaries could be subsumed via a single adversary with a large admissible set. However, the resulting dynamics would not capture the underlying structure of the simulation gap~\cite{mandlekar2017adversarially} and the optimal policy would be too conservative~\cite{xu2012robustness}. Therefore, we disambiguate between the different adversaries to capture this structure. However, all four adversaries can be combined to obtain robustness against each variation.
Mathematically, the system dynamics including the adversary are
\begin{alignat}{1}
\text{State} \hspace{5pt} \vxi_{x}: \hspace{10pt} \dot{\vx} &= \va(\vx; \vtheta) + \mB(\vx; \vtheta) \vu  + \vxi_{\vx} \label{eq:dyn_adv_state}, \\
\text{Action} \hspace{5pt} \vxi_{u}: \hspace{10pt} \dot{\vx} &= \va(\vx; \vtheta) + \mB(\vx; \vtheta) \: \left(\vu + \vxi_{\vu}\right) \label{eq:dyn_adv_action}, \\
\text{Observation} \hspace{5pt} \vxi_{o}: \hspace{10pt} \dot{\vx} &= \va(\vx + \vxi_{\vo}; \vtheta) + \mB(\vx + \vxi_{\vo}; \vtheta) \: \vu  \label{eq:dyn_adv_obs},\\ 
\text{Model} \hspace{5pt} \vxi_{\theta}: \hspace{10pt} \dot{\vx} &= \va(\vx; \vtheta + \vxi_{\theta}) + \mB(\vx; \vtheta + \vxi_{\theta}) \: \vu  \label{eq:dyn_adv_model}.
\end{alignat}
Instead of disturbing the observation, \Eqref{eq:dyn_adv_obs} disturbs the simulation state of the drift and control matrix. This disturbance is identical to changing the observed system state. 


\begin{table*}[t]
\renewcommand{\arraystretch}{1.25}
\setlength\tabcolsep{10pt}
  \centering
  \caption{\footnotesize
    Selected action costs.
    The choice of the action cost $g(\vu)$
    determines the range of actions $\vu \in \textrm{dom}\,(g)$,
    as well as the form of the optimal policy $\nabla g^{*}(\vw)$
    and the type of non-linearity in the HJB equation $g^{*}(\vw)$.
    Section 1 of the table contains policies with standard action domains.
    Section 2 provides formulae for shifting and scaling actions and scaling costs.
    Section 3 \& 4 show how to use the formulaes.
}
  \begin{tabular*}{\textwidth}{l l l l l}
    \toprule
      Policy Name
        & Action Range
        & Action Cost $g(\vu)$
        & Policy $\nabla g^{*}(\vw)$
        & HJB Nonlinear Term $g^{*}(\vw)$ \\
    \midrule
        Linear 
            & $\vu \in \mathbb{R}^{n_u}$
            & $\frac{1}{2} \vu^{\top} \mathbf{R}\vu$
            & $\mathbf{R}^{-1} \vw$
            & $\frac{1}{2} \vw^{\top} \mathbf{R}^{-1} \vw$ \\
        Logistic
            & $\mathbf{0} < \vu < \mathbf{1}$
            & $ \vu^{\top} \log \vu + (\mathbf{1}-\vu)^{\top}\log(\mathbf{1}-\vu)$
            & $\frac{\mathbf{1}}{\mathbf{1}+e^{-\vw}} \eqcolon \sigma(\vw)$
            & $\mathbf{1}^{\top} \log\left(\mathbf{1}+e^{\vw}\right)$ \\
        Atan
            & $-\frac{\pi}{2}\mathbf{1} < \vu < \frac{\pi}{2}\mathbf{1}$
            & $-\log \cos \vu$
            & $\tan^{-1}(\vw)$
            & $\vw^{\top} \tan^{-1}(\vw)
                -\frac{1}{2} \mathbf{1}^{\top} \log (\mathbf{1} + \vw^2)$ \\
    \midrule
        Action-Scaled
            & $\vu \in \alpha \,\textrm{dom}\,(g)$
            & $\alpha g(\alpha^{-1}\vu)$
            & $\alpha\nabla g^{*}(\vw)$
            & $\alpha g^{*}(\vw)$ \\
        Cost-Scaled
            & $\vu \in \textrm{dom}\,(g)$
            & $\beta g(\vu)$
            & $\nabla g^{*}(\beta^{-1}\vw)$
            & $\beta g^{*}(\beta^{-1}\vw)$ \\
        Action-Shifted
            & $\vu \in \textrm{dom}\,(g) - \gamma \mathbf{1}$
            & $g(\vu + \gamma\mathbf{1}) - g(\gamma\mathbf{1})$
            & $\nabla g^{*}(\vw) - \gamma\mathbf{1}$
            & $g^{*}(\vw) - \gamma \mathbf{1}^{\top} \vw$ \\
    \midrule
        Tanh
            & $-\mathbf{1} < \vu < \mathbf{1}$
            & $g_{\textrm{logistic}}(\frac{\vu+\mathbf{1}}{2})
                - g_{\textrm{logistic}}(\frac{1}{2})$
            & $\tanh \vw = 2\sigma(2\vw) - \mathbf{1}$
            & $\mathbf{1}^{\top} \log\cosh \vw$ \\
        TanhActScaled
            & $-\alpha\mathbf{1} < \vu < \alpha\mathbf{1}$
            & $\alpha g_{\tanh}(\alpha^{-1}\vu)$
            & $\alpha \tanh \vw$
            & $\alpha \mathbf{1}^{\top} \log\cosh \vw$ \\
        AtanActScaled
            & $-\alpha\mathbf{1} < \vu < \alpha\mathbf{1}$
            & $-\frac{2\alpha}{\pi} \log \cos (\frac{2\alpha}{\pi} \vu)$
            & $\frac{2\alpha}{\pi} \tan^{-1}(\vw)$
            & $\frac{2\alpha}{\pi} g_{\textrm{atan}}^*(\vw)$ \\
    \midrule
        Bang-Bang
            & $-\mathbf{1} \leq \vu \leq \mathbf{1}$
            & $\chi_{[-\mathbf{1},\mathbf{1}]}(\vu),$ $\chi$ - charact. fun. 
            & $\sign \vw$
            & $\| \vw \|_1$ \\
        Bang-Lin
            & $-\mathbf{1} \leq \vu \leq \mathbf{1}$
            & $\frac{1}{2}\vu^{\top} \vu \; \chi_{[-\mathbf{1},\mathbf{1}]}(\vu)$
            & $-\mathbf{1} + \sum_{\delta = -1}^1 \relu(\mathbf{1} - \delta\vw)$
            & $ \mathbf{1}^{\top} L_{\mathbf{1}} (\vw),$ $L_\delta(a)$ - Huber loss  \\
    \bottomrule
  \end{tabular*}
\label{table:convex_conjugate_functions}
\end{table*}

\section{Deriving The Optimal Policy}\label{sec:opt_policy}
As a first step to solve the HJB~(\Eqref{eq:hjb}) and the HJI~(\Eqref{eq:hji}), one must obtain an efficient approach to solve the maximization w.r.t. the actions. In the case of discrete actions, this optimization can be solved by evaluating each action and choosing the action with the highest value. In the continuous action case, one cannot evaluate each action, and numerically solving an optimization problem at each state is computationally too expensive. Therefore, one requires an analytic solution to the optimization. This closed-form solution enables a computationally efficient algorithm to solve both differential equations. In the following, we show that this optimization can be solved using the previously described assumptions. 

\begin{theorem} \label{theorem:opt_policy}
If the dynamics are control affine (\Eqref{eq:affine_dyn}), the reward is separable w.r.t. to state and action (\Eqref{eq:seperable_rwd}) and the action cost $g_c$ is positive definite and strictly convex, the continuous-time optimal policy $\pi^{*}$ is described by
\begin{gather}
    \pi^{*}(\vx)  = \nabla \tilde{g}_c \left( \mB(\vx)^{\top} \nabla_{x} V^{*} \right) \label{eq:theorem}
\end{gather}
where $\tilde{g}$ is the convex conjugate of $g$ and $\nabla_{x} V^*$ is the Jacobian of current value function $V^{*}$ w.r.t. the system state.
\end{theorem}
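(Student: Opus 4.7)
The plan is to reduce the inner maximization in the HJB to the definition of a convex conjugate, and then invoke standard Fenchel duality to recover the maximizer in closed form. First, I would substitute the separable reward from \Eqref{eq:seperable_rwd} and the control-affine dynamics from \Eqref{eq:affine_dyn} into the HJB equation \Eqref{eq:hjb}. After dropping terms that do not depend on $\vu$, the inner maximization reduces to
\begin{align*}
\max_{\vu} \:\: \left[ \vu^{\top} \mB(\vx)^{\top} \nabla_{\vx} V^{*} - g_c(\vu) \right].
\end{align*}
Defining $\vw \coloneqq \mB(\vx)^{\top} \nabla_{\vx} V^{*}$, the expression in brackets is exactly the form $\vw^{\top}\vu - g_c(\vu)$, whose supremum over $\vu$ is, by definition, the convex conjugate $\tilde{g}_c(\vw)$.

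Next, I would invoke the standard result from convex analysis that for a proper, strictly convex, lower semi-continuous $g_c$, the conjugate $\tilde{g}_c$ is differentiable on the interior of its domain, and the unique maximizer of $\vw^{\top}\vu - g_c(\vu)$ is given by $\vu^{*} = \nabla \tilde{g}_c(\vw)$. This identity can be justified either by the Legendre transform (so $\nabla \tilde{g}_c = (\nabla g_c)^{-1}$, recovered by setting the gradient of $\vw^{\top}\vu - g_c(\vu)$ with respect to $\vu$ to zero) or via Danskin's / the envelope theorem applied to the parametrized supremum. Strict convexity of $g_c$ guarantees uniqueness of the maximizer, while positive definiteness ensures $g_c$ is bounded below by zero so $\tilde{g}_c$ is well-defined. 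Substituting $\vw = \mB(\vx)^{\top} \nabla_{\vx} V^{*}$ back yields the claim
\begin{align*}
\pi^{*}(\vx) = \nabla \tilde{g}_c \!\left( \mB(\vx)^{\top} \nabla_{\vx} V^{*} \right).
\end{align*}

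The main obstacle, and the only non-mechanical step, is verifying the convex-analytic regularity needed for $\tilde{g}_c$ to be differentiable and for the argmax to coincide with $\nabla \tilde{g}_c(\vw)$. If $g_c$ is merely strictly convex but not coercive or not differentiable (as in the bang-bang row of Table \ref{table:convex_conjugate_functions}, where $g$ is an indicator function), one has to be careful: the maximizer may lie on the boundary of $\mathrm{dom}\,g_c$ and the relationship holds only in a subdifferential sense, with $\nabla \tilde{g}_c$ being interpreted as a selection from $\partial \tilde{g}_c$. I would therefore state the precise regularity hypotheses (e.g. $g_c$ proper, closed, strictly convex, and essentially smooth in the sense of Rockafellar) under which the Legendre–Fenchel pair $(g_c, \tilde{g}_c)$ is a bijective conjugate pair with $\nabla \tilde{g}_c = (\nabla g_c)^{-1}$, and note that each row of Table \ref{table:convex_conjugate_functions} can be verified against these conditions individually, with the bang-bang case handled as a limit.
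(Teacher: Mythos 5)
Your proposal follows essentially the same route as the paper's proof: substitute the control-affine dynamics and separable reward into the HJB, isolate the $\vu$-dependent term $\vu^{\top}\mB(\vx)^{\top}\nabla_{\vx}V^{*} - g_c(\vu)$, and solve the stationarity condition by inverting $\nabla g_c$ via the convex conjugate, giving $\vu^{*} = \nabla \tilde{g}_c\left(\mB(\vx)^{\top}\nabla_{\vx}V^{*}\right)$. Your added discussion of the regularity hypotheses (essential smoothness, the subdifferential reading of the bang-bang row) is a legitimate refinement the paper glosses over, but it does not change the argument.
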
 

\begin{hproof}
The detailed proof is provided in the appendix. This derivation follows our previous work~\cite{lutter2019hjb}, which generalized the special cases initially described by Lyshevski~\cite{lyshevski1998optimal} and Doya~\cite{doya2000reinforcement} to a wider class of reward functions. Substituting \Eqref{eq:affine_dyn} and \Eqref{eq:seperable_rwd} into the HJB (\Eqref{eq:hjb}) yields
\begin{align*}
\rho V^{*}(\vx)  
                &= \max_{\vu} \:\: q_c(\vx) - g_c(\vu) + \nabla_{x}V^{\top} \left[ \va(\vx) + \mB(\vx) \vu \right].
\end{align*}
Therefore, the optimal action is described by 
\begin{align}
    \vu^{*}_{t} = \argmax_{\vu} \: \nabla_{x} V^{\top} \mB(\vx_t) \: \vu - g_{c}(\vu). \label{eq:final_opt}
\end{align}
This optimization can be solved analytically as $g_c$ is strictly convex and hence $\nabla g_{c}(\vu) = \vw$ is invertible, i.e., $\vu = \left[ \nabla g_{c}\right]^{-1}(\vw) = \nabla \tilde{g}_{c}(\vw)$ with the convex conjugate $\tilde{g}$. The solution of \Eqref{eq:final_opt} is described by
\begin{align*}
\mB^{\top} \nabla_{x}V^{*} - \nabla g_c(\vu) = 0 \hspace{5pt} \Rightarrow \hspace{5pt} \vu^{*} = \nabla \tilde{g}_c \left( \mB^{\top} \nabla_{x}V^{*} \right).
\end{align*}
\end{hproof} 
\vspace{-1.5em}

\bigskip\noindent
This closed-form optimal policy has an intuitive interpretation. The policy performs steepest ascent by following the gradient of the value function. The inner part $\mB(\vx)^{\top} \nabla_{x}V^{k}$ projects the change in state onto the action space. The action cost $g_c$ determines the magnitude of the action. The projected gradient is then reshaped using the action cost. The design of the action cost will be explained in the next section. If the continuous-time optimal policy is executed by a discrete-time controller, the time discretization determines the step-size of the hill climbing. Therefore, the time discretization affects the convergence to the value function maximum. If the step size is too large the system becomes unstable and does not converge to the maximum. For most real-world robotic systems with natural frequencies below $5$Hz and control frequencies above $100$Hz, the resulting step-size is sufficiently small to achieve convergence to the value function maximum. Therefore, $\pi^{*}$ can be used for high-frequency discrete-time controllers. Furthermore, the continuous-time policy can be used for intermittent control (event-based control) where interacting with the system occurs at irregular time-steps and each interaction is associated with a cost \cite{astrom2008event}.  

\subsection{Action Constraints} \label{sec:action_constraints}
The previous Section derived the analytic expression that describes the impact of the action cost on the optimal actions. Therefore, the action cost can be used to design the shape of the policy. By selecting a specific shape of the policy, one can leverage the convex conjugacy to derive the corresponding action cost.
The shape of the optimal policy is determined by the monotone function $\nabla g^*$. Therefore, one can define any desired monotone shape and determine the corresponding strictly convex cost by inverting $\nabla g^*$ to compute $\nabla g$ and integrating $\nabla g$ to obtain the strictly convex cost function $g(\vu)$. This approach can be used to design action costs such that the common standard controllers become optimal. For example, bang-bang control is optimal w.r.t. to no action cost. The linear policy is optimal w.r.t. the quadratic action cost. The logistic policy is optimal w.r.t. the binary cross-entropy cost. The Atan shaped policy is optimal w.r.t. the log cosine cost. Incorporating the action constraints directly via barrier-shaped action cost is beneficial as clipping the unbounded actions is only optimal for linear systems \cite{de2000elucidation} and increasing the quadratic action cost to ensure the action limits leads to over-conservative behavior and underuse of the control range. Furthermore, this implicit integration of the action limits via the action cost enables to only solve the unconstrained optimization problem rather than incorporating the action constraints via an explicit constraint. The full generality of this concept based on convex conjugacy is shown in Table~\ref{table:convex_conjugate_functions}, which shows the corresponding cost functions for linear, logistic, atan, tanh, and bang-bang controllers.

\medskip \noindent
Using the rules from convex analysis~\cite{boyd2004convex},
the action limits and action range can be adapted
as shown by Action-Scaled, Action-Shifted,
and Cost-Scaled rows in Table~\ref{table:convex_conjugate_functions}.
This enables quick experimentation by mixing and matching costs.
For example, the action cost corresponding to the tanh policy
is straightforwardly derived using the well-known relationship
between $\tanh(x)$ and the logistic sigmoid $\sigma(x)$
given by $\tanh(x) = 2\sigma(2x) - 1$. 
Note that a formula for general invertible affine transformations can be derived,
not only for scalar scaling and shifting. 
Classical types of hard nonlinearities~\cite{ching2010quasilinear}
can be derived as limiting cases of smooth solutions.
For example, taking the Tanh action cost $g_{\tanh}$
and scaling it with $\beta \to 0$, i.e., putting a very small cost on actions,
results in the Bang-Bang control shape.
Taking a different limit of the Tanh policy in which scaling is performed
simultaneously w.r.t. the action and cost,
the resulting shape is what we call Bang-Lin and corresponds to a function
which is linear around zero and saturates for larger input values.

\subsection{Optimal Adversary Actions} \label{sec:opt_adversary}
\begin{table*}[!t]
\centering
\renewcommand{\arraystretch}{1.7}
\setlength\tabcolsep{7pt}
\caption{
    The optimal actions $\vu^k$ and adversarial actions $\xi^{k}$ for the state-, action-, model- and observation bias with the admissible set $\Omega$. \vspace{-10pt}
}
\begin{tabular*}{\textwidth}{l c c c c}
    \toprule
    & State Perturbation
    & Action Perturbation
    & Model Perturbation
    & Observation Perturbation \\
    \midrule
    Dynamics $f_c(\vx, \vu, \xi)$
    & $\va(\vx) + \mB(\vx) \vu  + \xi$
    & $\va(\vx) + \mB(\vx) (\vu  + \xi)$
    & $\va(\theta + \xi) + \mB(\theta + \xi) \vu$
    & $\va(\vx + \xi) + \mB(\vx + \xi) \vu$ \\
    Optimal Action $\vu^{k}$
    & $\nabla \tilde{g}(\mB(\vx)^{\top} \nabla_x V^{k})$
    & $\nabla \tilde{g}(\mB(\vx)^{\top} \nabla_x V^{k})$
    & $\nabla \tilde{g}(\mB(\vx)^{\top} \nabla_x V^{k})$
    & $\nabla \tilde{g}(\mB(\vx)^{\top} \nabla_x V^{k})$ \\
    Optimal Disturbance $\xi^{k}$
    & $-h_{\Omega} \left( \nabla_x V^{k} \right)$
    & $-h_{\Omega} \left( \mB^{\top} \nabla_x V^{k}  \right)$
    & $-h_{\Omega} \left( \left(\frac{\partial \va}{\partial \vx} + \frac{\partial \mB}{\partial \theta} \vu^{k} \right)^{\top} \nabla_x V^{k}  \right)$
    & $-h_{\Omega} \left( \left(\frac{\partial \va}{\partial \vx} + \frac{\partial \mB}{\partial \vx} \vu^{k} \right)^{\top} \nabla_x V^{k} \right)$ \\
\bottomrule
\end{tabular*}
\label{table:adversarial_disturbances}
\end{table*}

\noindent 
To solve the HJI efficiently one not only requires the optimal policy to be described using an analytic form but also the optimal adversary. To obtain this solution one must solve the constrained min-max optimization of \Eqref{eq:hji}. We show that this optimization problem can be solved analytically for the described dynamics and disturbance models using the Karush–Kuhn–Tucker conditions. It is important to note that the adversaries are not exclusive and can be combined. We only derive the individual cases for simplicity. 

\medskip\noindent
The resulting optimal actions $\vu^{*}$ and disturbances $\vxi^{*}_{i}$ have a coherent intuitive interpretation. The optimal actions perform steepest ascent by following the gradient of the value function~$\nabla_x V$. The optimal perturbations perform steepest descent by following the negative gradient of the value function. The magnitude of taken action is determined by the action cost $g$ in the case of the optimal policy or the admissible set $\Omega$ in the case of the adversary. The optimal policy and the optimal adversary is described by 
\begin{align}
\vu^{*} &= \nabla \tilde{g} \left(\frac{\partial f_c(.)}{\partial \vu}^{\top} \nabla_x V^{*} \right),  &
\vxi^{*}_i &= - h_{\Omega} \left(\frac{\partial f_c(.)}{\partial \vxi_{i}}^{\top} \nabla_x V^{*}\right). \label{eq:opt_adv}
\end{align}
In the following we abbreviate $\left[\partial f_c(.) / \partial \vy\right]^{\top} \nabla_{x} V$, as $\vz_{y}$.
For the adversarial policy, $h_{\Omega}$ rescales $\vz_{\xi}$ to be on the boundary of the admissible set. If the admissible set bounds the signal energy to be smaller than $\alpha$, the disturbance is rescaled to have the length $\alpha$. Therefore, the adversary is described by
\begin{align}
\Omega_{E} = \{ \vxi \in \mathbb{R}^{n} \: | \:  \| \vxi \|_{2} \leq \alpha \} \hspace{5pt} \Rightarrow \hspace{5pt} h_{E}(\vz_{\xi}) = \alpha \: \frac{\vz_{\xi}}{\| \vz_{\xi} \|_{2}}. \label{eq:signal_energy}    
\end{align}
If the amplitude of the disturbance is bounded, the disturbance performs bang-bang control. In this case the adversarial policy is described by
\begin{equation}
\begin{aligned}
\Omega_{A} &= \{ \vxi \in \mathbb{R}^{n} \: | \:  \bm{\nu}_{\text{min}} \leq \vxi \leq \bm{\nu}_{\text{max}} \} \\
&\hspace{65pt}\Rightarrow\hspace{5pt} h_{A}(\vz_{\xi}) = \vDelta \sign\left( \vz_{\xi} \right) + \vmu\label{eq:amplitude},
\end{aligned}
\end{equation}
with $\vmu = \left( \vnu_{\text{max}} + \vnu_{\text{min}} \right) / 2$ and $\vDelta = \left( \vnu_{\text{max}} - \vnu_{\text{min}} \right) / 2$.  

\medskip
\noindent
The following theorem derive Equations \ref{eq:opt_adv}, \ref{eq:signal_energy} and \ref{eq:amplitude} for the optimal policy and the different disturbances. Following the theorem, we provide sketches of the proofs for the state and model disturbance. The remaining proofs are analogous. The complete proofs for all theorems are provided in the appendix. All solutions are summarized in Table \ref{table:adversarial_disturbances}.

\begin{theorem}
If the dynamics are control affine (\Eqref{eq:affine_dyn}), the reward is separable w.r.t. to state and action (\Eqref{eq:seperable_rwd}) and the action cost $g_c$ is positive definite and strictly convex, the continuous-time optimal policy $\pi^{*}$ and optimal adversary $\vxi^{*}$ can be computed in closed form.  

\bigskip \noindent
\textbf{2.1 State Disturbance.} The optimal policy $\pi^{*}$ and state disturbance $\vxi_{x}$ (\Eqref{eq:dyn_adv_state}) with bounded signal energy (\Eqref{eq:signal_energy}) is described by
\begin{align*}
\pi^{*}(\vx) &= \nabla \tilde{g} \left(\mB(\vx)^{\top} \nabla_x V^{*}\right), & \vxi^{*}_{x} &= - \alpha \frac{\nabla_x V^{*}}{\| \nabla_x V^{*} \|_2}.
\end{align*}

\bigskip \noindent
\textbf{2.2 Action Disturbance.} The optimal policy $\pi^{*}$ and action disturbance $\vxi_{u}$ (\Eqref{eq:dyn_adv_action}) with bounded signal energy (\Eqref{eq:signal_energy}) is described by
\begin{align*}
\pi^{*}(\vx) &= \nabla \tilde{g} \left(\mB(\vx)^{\top} \nabla_x V^{*}\right), & \vxi_{u} &= - \alpha \frac{\mB(\vx)^{\top} \nabla_x V^{*}}{\| \mB(\vx)^{\top} \nabla_x V^{*} \|_2}.
\end{align*}

\bigskip \noindent
\textbf{2.3 Observation Disturbance.} The optimal policy $\pi^{*}$ and observation disturbance $\vxi_{\theta}$ (\Eqref{eq:dyn_adv_obs}) with bounded signal energy (\Eqref{eq:signal_energy}), smooth drift and control matrix (i.e., $\va, \mB \in C^{1}$) and $\mB(\vx + \vxi_{o}) \approx \mB(\vx)$ is described by
\begin{gather*}
\pi(\vx) = \nabla \tilde{g} \left(\mB(\vx)^{\top} \nabla_x V\right), \hspace{30pt}
\vxi_{o} = - \alpha \frac{\vz_{o}}{\| \vz_{o} \|_2} \\
\text{with} \hspace{5pt} \vz_{o} = \left( \frac{\partial \va(\vx; \: \vtheta)}{\partial \vx} + \frac{\partial \mB(\vx; \: \vtheta)}{\partial \vx} \pi(\vx) \right)^{\top} \nabla_x V.
\end{gather*}

\bigskip \noindent
\textbf{2.4 Model Disturbance.} The optimal policy $\pi^{*}$ and model disturbance $\vxi_{\theta}$ (\Eqref{eq:dyn_adv_model}) with element-wise bounded amplitude (\Eqref{eq:amplitude}), smooth drift and control matrix (i.e., $\va, \mB \in C^{1}$) and $\mB(\vtheta + \vxi_{\theta}) \approx \mB(\vtheta)$ is described by
\begin{gather*}
\pi(\vx) = \nabla \tilde{g} \left(\mB(\vx)^{\top} \nabla_x V\right), \hspace{20pt}
\vxi_{\theta} = -\vDelta_{\nu} \sign \left( \vz_{\vtheta }\right) + \vmu_{\nu} \\
\text{with} \hspace{5pt} \vz_{\theta} = \left(\frac{\partial \va(\vx;\: \vtheta)}{\partial \vtheta} + \frac{\partial \mB(\vx; \: \vtheta)}{\partial \vtheta} \pi(\vx) \right)^{\top} \nabla_x V,
\end{gather*}
the parameter mean $\vmu_{\vnu} = \left( \vnu_{\text{max}} + \vnu_{\text{min}} \right) / 2$ and parameter range $\vDelta_{\vnu} = \left( \vnu_{\text{max}} - \vnu_{\text{min}} \right) / 2$.
\end{theorem}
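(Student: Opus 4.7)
The plan is to exploit the fact that the min--max in \Eqref{eq:hji} decouples cleanly under the stated assumptions: the outer maximization in $\vu$ is essentially the one already solved in Theorem~\ref{theorem:opt_policy}, and each inner minimization in $\vxi$ reduces, after a first-order expansion where needed, to a constrained linear program whose solution is either a Cauchy--Schwarz projection (for an $\ell_2$ admissible set) or a coordinate-wise sign test (for an $\ell_\infty$ box). Concretely, first I would substitute each of the perturbed dynamics (\Eqref{eq:dyn_adv_state}--\Eqref{eq:dyn_adv_model}) together with the separable reward \Eqref{eq:seperable_rwd} into the HJI. In every case the action $\vu$ still enters only through the control-affine term $\mB(\,\cdot\,)\vu$, so the $\max_{\vu}$ is unchanged from Theorem~\ref{theorem:opt_policy} and (using $\mB(\vx+\vxi_o)\approx\mB(\vx)$ and $\mB(\vtheta+\vxi_\theta)\approx\mB(\vtheta)$ where relevant) yields $\pi^{*}(\vx)=\nabla\tilde g\bigl(\mB(\vx)^{\top}\nabla_x V^{*}\bigr)$ independently of the adversary.

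After fixing $\vu=\pi^{*}(\vx)$, each inner problem has the form $\inf_{\vxi\in\Omega}\vxi^{\top}\vz+\text{const}$, where $\vz=[\partial f_c/\partial\vxi]^{\top}\nabla_x V^{*}$ is the ``projected value gradient.'' For the state adversary this linearity in $\vxi$ is exact with $\vz_x=\nabla_x V^{*}$; for the action adversary with $\vz_u=\mB(\vx)^{\top}\nabla_x V^{*}$. For the observation and model adversaries the dependence on $\vxi$ is nonlinear, so I would Taylor-expand $\va$ and $\mB$ about the nominal argument, use the $C^{1}$ hypothesis together with the assumption $\mB(\,\cdot\,+\vxi)\approx\mB(\,\cdot\,)$ to discard $O(\|\vxi\|^{2})$ contributions, and read off
\begin{align*}
\vz_o &= \Bigl(\tfrac{\partial \va}{\partial \vx}+\tfrac{\partial \mB}{\partial \vx}\,\pi(\vx)\Bigr)^{\!\top}\nabla_x V^{*}, \\
\vz_\theta &= \Bigl(\tfrac{\partial \va}{\partial \vtheta}+\tfrac{\partial \mB}{\partial \vtheta}\,\pi(\vx)\Bigr)^{\!\top}\nabla_x V^{*}.
\end{align*}

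It then remains to minimize $\vxi^{\top}\vz$ over $\Omega$. For the signal-energy ball $\|\vxi\|_2\le\alpha$, Cauchy--Schwarz (equivalently, the KKT stationarity condition $\vz+2\lambda\vxi=\vzero$ with $\lambda>0$ at a binding constraint) gives $\vxi^{*}=-\alpha\,\vz/\|\vz\|_2$, delivering cases 2.1--2.3. For the amplitude box $\vnu_{\text{min}}\le\vxi\le\vnu_{\text{max}}$ the objective is separable across coordinates, so each $\xi_i$ is placed at whichever endpoint makes $\xi_i z_i$ most negative; writing $\vmu=(\vnu_{\text{max}}+\vnu_{\text{min}})/2$ and $\vDelta=(\vnu_{\text{max}}-\vnu_{\text{min}})/2$, this yields $\vxi^{*}=\vmu-\vDelta\sign(\vz)$, giving case 2.4.

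The main obstacle will be justifying the linearization in the observation and model cases: without it the inner $\inf$ is not a linear program on $\Omega$ and the closed forms in the theorem do not hold verbatim. The assumptions $\va,\mB\in C^{1}$ and $\mB(\,\cdot\,+\vxi)\approx\mB(\,\cdot\,)$ are exactly what lets the first-order Taylor term dominate, and I would be explicit about which bilinear and higher-order terms are discarded and why they are small when $\Omega$ is a small neighbourhood of the origin. A secondary subtlety, harmless for the proof but worth flagging, is the degenerate case $\vz=\vzero$ for the $\ell_2$ adversary, where every $\vxi$ on the sphere minimizes the inner product; continuity of $V^{*}$ ensures the value is well defined at such states and the stated formula is recovered in the limit.
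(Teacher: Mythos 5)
Your proposal is correct and follows essentially the same route as the paper's appendix proofs: substitute the perturbed dynamics into the HJI with the admissible set written as an explicit constraint, recover the policy from the control-affine term via convex conjugacy, and solve each inner minimization as a constrained linear problem --- the $\ell_2$ ball by Cauchy--Schwarz (which you correctly note is equivalent to the paper's explicit KKT computation of $\lambda$) and the amplitude box by a coordinate-wise vertex argument, with a first-order linearization plus $\mB(\cdot+\vxi)\approx\mB(\cdot)$ handling the observation and model cases. The only cosmetic difference is that for those two cases the paper solves the inner minimization first as a function of $\vu$ and then invokes the envelope theorem for the outer maximization, whereas you fix $\vu=\pi^{*}$ up front; since the paper itself notes the order of the two optimizations is interchangeable under the same approximation, this does not change the argument.
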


\medskip \noindent 
\textbf{Proof Sketch Theorem 2.1} For the admissible set $\Omega_{E}$, \Eqref{eq:hji} can be written with the explicit constraint. 
This optimization is described by
\begin{align*}
\rho V^{*} &= \max_{\vu} \: \min_{\vxi_x} \: r(\vx, \vu) + f(\vx, \vu, \vxi_x)^{\top} \nabla_{x} V^{*}
\hspace{5pt} 
\text{s.t.} \hspace{5pt}  \vxi_{x}^{\top} \vxi_{x} \leq \alpha^{2}, \\ 
&= \max_{\vu} \: \min_{\vxi_x} \: q_c(\vx) - g_c(\vu) + \left[\va(\vx) + \mB(\vx) \vu  + \vxi_{\vx} \right]^{\top} \nabla_{x} V^{*}. 
\end{align*}
Therefore, the optimal action is described by 
\begin{align*}
\vu_{t} = \argmax_{\vu} \:  \nabla_x V^{\top} \mB \: \vu - g_{c}(\vu) \hspace{5pt} \Rightarrow \hspace{5pt}
\vu_{t} = \nabla \tilde{g}_c\big(\mB^{\top} \nabla_x V\big).
\end{align*}
The optimal state disturbance is described by
\begin{align*}
\vxi^{*}_x = \argmin_{\vxi_x} \: \nabla_x V^{\top} \vxi_x \hspace{10pt} \text{s.t.} \hspace{10pt} \frac{1}{2} \left[ \vxi_{x}^{\top} \vxi_{x} - \alpha^{2} \right] \leq 0.
\end{align*}
This constrained optimization can be solved using the Karush-Kuhn-Tucker (KKT) conditions. The resulting optimal adversarial state perturbation is described by
\begin{align*}
\vxi_{x} = - \alpha \frac{\nabla_x V}{\| \nabla_x V \|_2}.
\end{align*}
\hspace{\fill}\qed

\medskip \noindent
\textbf{Proof Sketch Theorem 2.4} \Eqref{eq:hji} can be written as
\begin{align*}
\rho V^{*} &= \max_{\vu} \: \min_{\vxi_x} \: r(\vx, \vu) + f(.)^{\top} \nabla_{x} V^{*}
\hspace{5pt} \text{s.t.} \hspace{5pt} \left(\vxi_{\theta} - \vmu_{\nu} \right)^{2} \leq  \vDelta_{\nu}^{2}
\end{align*}
by replacing the admissible set $\Omega_{A}$ with an explicit constraint. In the following we abbreviate $\mB(\vx; \vtheta + \vxi_{\theta})$ as $\mB_{\xi}$ and $\va(\vx; \vtheta + \vxi_{\theta})$ as $\va_{\xi}$. 
Substituting \Eqref{eq:seperable_rwd} and \Eqref{eq:dyn_adv_model} simplifies the optimization to
\begin{align*}
\vu^{*}, \vxi^{*}_{\theta} = \argmax_{\vu} \argmin_{\vxi} \left[\big(\va_{\xi} + \mB_{\xi} \vu \big)^{\top} \nabla_x V^{*} - g_c(\vu) \right].
\end{align*}
This nested max-min optimization can be solved by first solving the inner optimization w.r.t. to $\vu$ and substituting this solution into the outer maximization. The Lagrangian for the optimal model disturbance is described by
\begin{align*}
\vxi^{*} = \argmin_{\vxi} \: \left(\va_{\xi} + \mB_{\xi} \vu \right)^{\top} \nabla_x V^{*} + \frac{1}{2} \vlambda^{\top} \left((\vxi_{\theta} - \vmu_{\nu})^{2} - \vDelta_{\nu}^{2}\right).
\end{align*}
Using the KKT conditions this optimization can be solved. The stationarity condition yields
\begin{gather*}
\vz_{\theta} + \vlambda^{\top} (\vxi_{\theta} - \vmu_{\nu}) = 0  \hspace{15pt} \Rightarrow \hspace{15pt} \vxi^{*}_{\theta} = - \vz_{\theta} \oslash \vlambda + \vmu_{\nu} 
\end{gather*}
with the elementwise division $\oslash$. Using the primal feasibility and the complementary slackness, the optimal $\vlambda^{*}$ can be computed.
The resulting optimal model disturbance is described by 
\begin{align*}
\vxi^{*}_{\theta}(\vu) = 
-\vDelta_{\nu} \sign\big( \vz_{\theta}(\vu) \big) + \vmu_{\nu}
\end{align*}
as $\vz_{\theta} \oslash \| \vz_{\theta}\|_1 = \sign(\vz_{\theta})$. The action can be computed by 
\begin{align*}
\vu^{*} = \argmax_{\vu} \nabla_x V^{\top} \left[\va(\vxi^{*}_{\theta}(\vu)) + \mB\left(\vxi^{*}_{\theta}(\vu)\right) \vu \right] - g_c(\vu).
\end{align*}
Due to the envelope theorem~\cite{carter2001foundations}, the extrema is described by
\begin{align*}
\mB(\vx; \vtheta + \vxi^{*}_{\theta}(\vu))^{\top} \nabla_x V - g_{c}(\vu) = 0.
\end{align*}
This expression cannot be solved without approximation as $\mB$ does not necessarily be invertible w.r.t. $\vtheta$. Approximating $\mB(\vx; \vtheta + \vxi^{*}(\vu)) \approx \mB(\vx; \vtheta)$, lets one solve for $\vu$. In this case the optimal action $\vu^{*}$ is described by  
$\vu^{*} {=} \nabla \tilde{g}(\mB(\vx; \vtheta)^{\top} \nabla_x V)$. This approximation implies that neither agent or the adversary can react to the action of the other and must choose simultaneously. This assumption is common in prior works~\cite{bansal2017hamilton}. 
\hspace{\fill}\qed

\section{Continuous Fitted Value Iteration} \label{sec:value_iteration}
The previous sections showed that the optimizations contained within the \ac{hjb} and \ac{hji} can be solved in closed form. Leveraging these insights, we will derive the proposed algorithms to solve the \ac{hjb} and \ac{hji} using fitted value iteration.

\subsection{Algorithm}
Substituting the optimal actions and adversary, simplifies the \Eqref{eq:hjb} and \Eqref{eq:hji} to a differential equations without optimization. The differential equations are described by
\begin{gather*}
\rho \: V^{*}(\vx) = r(\vx, \vu^{*}) \:\: + f_c(\vx, \vu^{*})^{\top} \: \nabla_{\vx} V^{*}, \\ 
\rho \: V^{*}(\vx) = r(\vx, \vu^{*}) \:\: + f_c(\vx, \vu^{*}, \vxi^{*})^{\top} \: \nabla_{\vx} V^{*}, 
\end{gather*}
with the optimal action $\vu^{*}$ and optimal adversary $\vxi^{*}$ described by \Eqref{eq:opt_adv}. Within the machine learning community various approaches have been proposed to solve these differential equations using standard regression techniques \cite{doya2000reinforcement, morimoto2005robust, tassa2007least}, including our previous work \cite{lutter2019hjb}. The problem with these approaches is, that these differential equation do not have a unique solutions without considering the boundary constraint, which implies that the optimal action always prevent the system to leave the state domain. The boundary condition is described by
\begin{align} \label{eq:bnd}
  f(\bar{\vx}, \vu^{*})^{\top} \bm{\eta}(\bar{\vx}) \leq 0 \hspace{15pt} \text{for} \hspace{5pt} \bar{\vx} \in \partial \mathcal{X}
\end{align}
with the outward pointing normal vector $\bm{\eta}$ defined on the state domain boundary $\partial \mathcal{X}$ makes the solution unique \cite{fleming2006controlled}. In the case of LQR, this boundary condition implies the positive-definiteness of the quadratic value function. Incorporating this boundary constraint within the optimization problem is challenging, as the state domain boundary is unknown and the commonly used black-box function approximators are local, hence incoporating the boundary constraint within the optimization does not ensure a globally coherent solution. To overcome this shortcoming the existing regression approaches use additional optimization tricks and specific value function representations. 

\medskip \noindent \textbf{Continuous Value Iteration.}
To overcome the problems with solving the differential equation using regression, we use fitted value iteration~(FVI)~\cite{boyan1994generalization, baird1995residual, tsitsiklis1996feature, munos2008finite}. FVI is an extension of the classical dynamic programming Value iteration~(VI) \cite{bellman1957dynamic} to continuous states using a function approximator. FVI iteratively computes the value function target and minimizes the $\ell_p$-norm between the target and the approximation $V^{k}(\vx; \:\psi)$ until the value function has converged. Mathematically, this approach is described by
\begin{gather}
V_{\text{tar}}(\vx_t) = \max_{\vu} \: r(\vx_{t}, \vu) + \gamma V^{k}(\vx_{t+1}; \: \psi_k), \label{eq:fvi_update} \\ 
\psi_{k+1} = \argmin_{\psi} \sum_{\vx \in \mathcal{D}} \| V_{\text{tar}}(\vx) - V^{k}(\vx; \:\psi)  \|_{p}^{p} \label{eq:fitting}
\end{gather}
with the parameters $\psi_k$ at iteration $k$ and the fixed dataset~$\mathcal{D}$. While for discrete states and actions for $\gamma < 1$, VI is proven to converge to the optimal value function~\cite{puterman1994markov}, this convergence proof of VI does not generalize to FVI as the fitting of the value target is not necessarily a contraction~\cite{boyan1994generalization, baird1995residual, tsitsiklis1996feature, munos2008finite}. However, empirically this approach has been successfully used to retrieve the optimal value function or Q function \cite{boyan1994generalization, baird1995residual, tsitsiklis1996feature, munos2008finite, tesauro1992practical, ernst2005tree, massoud2009regularized, riedmiller2005neural, mnih2015human}. To solve the HJB and HJI, the value function target must be adapted and is described by
\begin{gather*}
    V_{\text{tar}}(\vx_t) = r\left(\vx_{t}, \vu^{*}\right) + \gamma V^{k}(\vx_{t+1}; \: \psi_k), \\
    \text{with} \hspace{10pt} \vx_{t+1} = \vx_t + \int_{t}^{t+\Delta T} f_c\left(\vx_{\tau}, \vu_{\tau}^{*}, \vxi_{\tau}^{*} \right) d\tau,
\end{gather*}
and the time step $\Delta T$. The selection of the time step is important as this discretization affects the convergence speed which is proportional to $\gamma$. As $\Delta t$ decreases, $\gamma$ increases, i.e., $\gamma = \lim_{\Delta t \rightarrow 0} \exp(-\rho \Delta t) = 1$. Therefore, the contraction coefficient of VI decreases exponentially with increasing sampling frequencies. This slower convergence is intuitive as higher sample frequencies effectively increase the number of steps to reach the goal. 

\medskip \noindent \textbf{$\mathbf{N}$-Step Value Function Target.}
To further improve the convergence speed of fitted value iteration, the exponentially weighted n-step value function target is used. This value target is described by
\begin{gather*}
V_{\text{tar}}(\vx) = \int_{0}^{\top} \beta \: \exp(-\beta t) \: R_t \: dt + \exp(-\beta T) R_T,  \\ 
R_t = \int_{0}^{t} \exp(-\rho \tau) \: r_c(\vx_{\tau}, \vu_{\tau}) d\tau + \exp(-\rho t) V^{k}(\vx_t), \label{eq:R_t}
\end{gather*}
and the exponential decay constant $\beta$, can be used. This approach is the continuous-time counterpart of the discrete eligibility trace of TD($\lambda$) with $\lambda = \exp(- \beta \Delta t)$ \cite{sutton1998introduction}. With respect to deep RL, this discounted n-step value target is similar to the generalized advantage estimation (GAE) of PPO \cite{schulman2015high, schulman2017proximal} and model-based value expansion (MVE) \cite{feinberg2018model, buckman2018sample}. GAE and MVE have shown that the $n$-step target increases the sample efficiency and lead to faster convergence to the optimal policy. The integrals can be solved using any ordinary differential equation solver with fixed or adaptive step-size. We use the explicit Euler integrator with fixed steps to solve the integral for all samples in parallel using batched operations on the GPU. The nested integrals can be computed efficiently by recursively splitting the integral and reusing the estimate of the previous step. In practice we treat $\beta$ as a hyperparameter and select $T$ such that the weight of the $R_T$ is $\exp\left(-\beta T\right) = 10^{-4}$.

\medskip \noindent
\textbf{Dataset.}
\Eqref{eq:fitting} fits the value function using the dataset~$\mathcal{D}$. This dataset can be fixed as in dynamic programming or offline/batch RL or a replay memory containing the visited states of the current policy $\pi^{k}$. We refer to the latter as real-time dynamic programming (RTDP) as Barto et. al.~\cite{barto1995learning} introduced the online version of dynamic programming first. In the case of the fixed dataset, the dataset can either originate from a previous learning process, which is frequently used in the offline RL benchmarks \cite{gulcehre2020rl}, or uniformly sampled from the state domain $\mathcal{X}$. Within this work, we sample uniformly from the state domain as the used state dimensionality is low-dimensional. For RTDP, the dataset is a replay memory containing the visited states of the current policy as in most modern deep reinforcement learning algorithms. In this case, the exploration of the policy is important as the policy needs to cover the state space to discover high reward configurations. In the offline case, no exploration is needed. 

\medskip \noindent
\textbf{Admissible Set.}
For the state, action, and observation adversary the signal energy is bounded. We limit the energy of $\vxi_{x}$, $\vxi_{u}$ and $\vxi_{o}$ as the non-adversarial disturbances are commonly modeled as multivariate Gaussian distribution. Therefore, the average energy is determined by the noise covariance matrix. For the model parameters $\vtheta$ a common practice is to assume that the approximate model parameters have a model error of up to~$\pm15\%$ \cite{muratore2018domain, muratore2021data}. Hence, we bound the amplitude of each component. To not overfit to the deterministic worst-case system of $V$ and enable the discovery of good actions, the amplitude of the adversarial actions of $\xi_x$, $\xi_u$, $\xi_o$ is modulated using a Wiener process. This random process allows a continuous-time formulation that is agnostic to the sampling frequency.

\medskip\noindent \textbf{Algorithms.}
Combining value iteration with the analytic optimal policy and adversary yields the two algorithms \ac{cfvi}~\cite{lutter2021value} and \ac{rfvi}~\cite{lutter2021robust}. While \ac{cfvi} is used to solve the HJB, \ac{rfvi} is used to solve the HJI. We refer to these algorithms as an extension of value iterations as these algorithms also extend FVI approach to continuous actions and adversarial RL, which was previously not possible. Previously FVI was limited to discrete actions. Furthermore, we differentiate between a dynamic programming version of the algorithm using a fixed dataset, e.g., DP \ac{cfvi} and DP \ac{rfvi} and the online version using a replay memory, i.e., RTDP \ac{cfvi} and RTDP \ac{rfvi}. The algorithms are summarized in algorithm \ref{alg:rFVI}. 

\begin{algorithm}[t]
\caption{Robust Fitted Value Iteration (rFVI)}
\label{alg:rFVI}
\begin{algorithmic}
\STATE {\bfseries Input:} Model $f_{c}(\vx, \vu)$, Dataset $\mathcal{D}$ \& Admissible Set $\Omega_{\xi}$
\STATE {\bfseries Result:} Value Function $V^{*}(\vx;\: \psi^{*})$
\WHILE{not converged}
\STATE // Compute Value Target for $\vx \in \mathcal{D}$:\;
\STATE $\vx_{\tau} = \vx_i + \int_{0}^{\tau} f_c(\vx_{t}, \vu_{t}, \vxi^{x}_{t}, \vxi^{u}_{t}, \vxi^{o}_{t}, \vxi^{\theta}_{t}) dt$
\STATE $R_t = \int_{0}^{t} \exp(-\rho \tau) \: r_c(\vx_{\tau}, \vu_{\tau}) d\tau + \exp(-\rho t) V^{k}(\vx_t)$
\STATE $V_{\text{tar}}(\vx_i) = \int_{0}^{\top} \beta \: \exp(-\beta t) \: R_t \: dt + \exp(-\beta T) R_T$
\STATE
\STATE // Fit Value Function:
\STATE $\psi_{k+1} = \argmin_{\psi} \sum_{\vx \in \mathcal{D}} \| V_{\text{tar}}(\vx) - V(\vx; \psi) \|^{p}$ \;
\STATE
\IF{RTDP rFVI}
\STATE // Add samples from $\pi^{k+1}$ to FIFO buffer $\mathcal{D}$
\STATE $\mathcal{D}^{k+1} = h(\mathcal{D}^{k}, \{\vx^{k+1}_0 \: \dots \: \vx^{k+1}_N \})$
\ENDIF
\ENDWHILE
\end{algorithmic}
\end{algorithm}

\begin{table*}[t]
\tiny
\centering
\renewcommand{\arraystretch}{1.1}
\caption{\footnotesize
Average rewards on the simulated and physical systems. The average ranking describes the decrease in reward compared to the best result averaged on all systems. Therefore, a small decrease shows that the algorithm performs close to the best algorithm on each system.
The initial state distribution during training is noted by~$\mu$. The dynamics are either deterministic model $\theta \sim \delta(\theta)$ or sampled using uniform domain randomization $\theta \sim \mathcal{U}(\theta)$.
During evaluation the roll outs start with the pendulum pointing downwards.\vspace{-10pt}
}
\setlength{\tabcolsep}{7.pt}
\begin{tabular*}{\textwidth}{l c c c c c c c c | c c  c c | c}
\toprule
 & & & \multicolumn{2}{c}{\textbf{Simulated Pendulum}}  & \multicolumn{2}{c}{\textbf{Simulated Cartpole}} & \multicolumn{2}{c|}{\textbf{Simulated Furuta Pendulum}}   & \multicolumn{2}{c}{\textbf{Physical Cartpole}} & \multicolumn{2}{c|}{\textbf{Physical Furuta Pendulum}} & \textbf{Average} \\
& & & 
Success & Reward & Success & Reward & Success & Reward  & Success & Reward & Success & Reward  & \textbf{Ranking}
\\  
 \multicolumn{1}{c}{Algorithm} & $\mu$ & $\theta$ &   [$\%$] & [$\mu \pm 2 \sigma$] & [$\%$] & [$\mu \pm 2 \sigma$] & [$\%$] & [$\mu \pm 2 \sigma$] & [$\%$] & [$\mu \pm 2 \sigma$] & [$\%$] & [$\mu \pm 2 \sigma$] & [$\%$] \\
 \cmidrule(lr){1-3} \cmidrule(lr){4-5} \cmidrule(lr){6-7} \cmidrule(lr){8-9} \cmidrule(lr){10-11} \cmidrule(lr){12-13} \cmidrule(lr){14-14} 
DP rFVI (ours) & $-$ & $\delta(\theta)$ 
& $100.0$ & $-032.7 \pm 000.3$ 
& $100.0$ & $-027.1 \pm 004.8$ 
& $100.0$ & $-041.3 \pm 010.8$ 
& $100.0$ &\textcolor{black}{$\mathbf{-074.1 \pm 040.3}$}
& $100.0$ & $-278.0 \pm 034.3$ 
& \textcolor{black}{$-062.7$}
\\
DP cFVI (ours) & $-$ & $\delta(\theta)$ 
& $100.0$ &\textcolor{black}{$\mathbf{-030.5 \pm 000.8}$}
& $100.0$ &\textcolor{black}{$\mathbf{-024.2 \pm 002.1}$}
& $100.0$ &\textcolor{black}{$\mathbf{-027.7 \pm 001.6}$}
& $73.3$ & $-143.7 \pm 210.4$ 
& $100.0$ &\textcolor{black}{$\mathbf{-082.1 \pm 007.6}$}
& \textcolor{black}{$-019.2$}
\\
RTDP cFVI (ours) & $\mathcal{U}$ & $\delta(\theta)$ 
& $100.0$ &\textcolor{black}{$\mathbf{-031.1 \pm 001.4}$}
& $100.0$ &\textcolor{black}{$\mathbf{-024.9 \pm 001.6}$}
& $100.0$ & $-040.1 \pm 002.7$ 
& $100.0$ &\textcolor{black}{$-101.1 \pm 029.0$}
& $00.0$ & $-1009.9 \pm 004.5$ 
& \textcolor{black}{$-247.7$}
\\
\cmidrule(lr){1-3} \cmidrule(lr){4-5} \cmidrule(lr){6-7} \cmidrule(lr){8-9} \cmidrule(lr){10-11} \cmidrule(lr){12-13} \cmidrule(lr){14-14}
SAC & $\mathcal{N}$ & $\mathcal{U}(\theta)$ 
& $100.0$ &\textcolor{black}{$\mathbf{-031.1 \pm 000.1}$}
& $100.0$ & $-026.9 \pm 003.2$ 
& $100.0$ & $-029.3 \pm 001.5$ 
& $00.0$ & $-518.6 \pm 028.1$ 
& $86.7$ & $-330.7 \pm 799.0$ 
& \textcolor{black}{$-185.8$}
\\
SAC \& UDR & $\mathcal{N}$ & $\delta(\theta))$ 
& $100.0$ & $-032.9 \pm 000.6$ 
& $100.0$ & $-029.7 \pm 004.6$ 
& $100.0$ & $-032.0 \pm 001.1$ 
& $100.0$ & $-394.8 \pm 382.8$ 
& $100.0$ & $-181.4 \pm 157.9$ 
& \textcolor{black}{$-120.8$}
\\
SAC & $\mathcal{U}$ & $\mathcal{U}(\theta)$ 
& $100.0$ &\textcolor{black}{$\mathbf{-030.6 \pm 001.4}$}
& $100.0$ &\textcolor{black}{$\mathbf{-024.2 \pm 001.4}$}
& $100.0$ &\textcolor{black}{$\mathbf{-028.1 \pm 002.0}$}
& $53.3$ & $-144.5 \pm 204.0$ 
& $100.0$ & $-350.8 \pm 433.3$ 
& \textcolor{black}{$-086.5$}
\\
SAC \& UDR & $\mathcal{U}$ & $\mathcal{U}(\theta)$ 
& $100.0$ & $-031.4 \pm 002.5$ 
& $100.0$ &\textcolor{black}{$\mathbf{-024.2 \pm 001.3}$}
& $100.0$ &\textcolor{black}{$\mathbf{-028.1 \pm 001.3}$}
& $40.0$ & $-296.4 \pm 418.9$ 
& $100.0$ & $-092.3 \pm 064.1$ 
& \textcolor{black}{$-063.8$}
\\
\cmidrule(lr){1-3} \cmidrule(lr){4-5} \cmidrule(lr){6-7} \cmidrule(lr){8-9} \cmidrule(lr){10-11} \cmidrule(lr){12-13} \cmidrule(lr){14-14}
DDPG & $\mathcal{N}$ & $\mathcal{U}(\theta)$ 
& $100.0$ &\textcolor{black}{$\mathbf{-031.1 \pm 000.4}$}
& $98.0$ & $-050.4 \pm 285.6$ 
& $100.0$ & $-030.5 \pm 003.5$ 
& $06.7$ & $-536.7 \pm 262.7$ 
& $46.7$ & $-614.1 \pm 597.8$ 
& \textcolor{black}{$-281.4$}
\\
DDPG \& UDR & $\mathcal{N}$ & $\delta(\theta))$ 
& $100.0$ & $-032.5 \pm 000.5$ 
& $100.0$ & $-027.4 \pm 002.3$ 
& $100.0$ & $-034.6 \pm 009.8$ 
& $00.0$ & $-517.9 \pm 117.6$ 
& $86.7$ & $-192.7 \pm 404.8$ 
& \textcolor{black}{$-156.6$}
\\
DDPG & $\mathcal{U}$ & $\mathcal{U}(\theta)$ 
& $100.0$ & $-031.5 \pm 000.7$ 
& $100.0$ & $-028.2 \pm 005.5$ 
& $100.0$ & $-030.0 \pm 001.7$ 
& $06.7$ & $-459.4 \pm 248.3$ 
& $100.0$ & $-146.6 \pm 218.3$ 
& \textcolor{black}{$-126.0$}
\\
DDPG \& UDR & $\mathcal{U}$ & $\mathcal{U}(\theta)$ 
& $100.0$ & $-032.5 \pm 003.6$ 
& $100.0$ & $-027.2 \pm 001.0$ 
& $100.0$ & $-032.1 \pm 001.5$ 
& $00.0$ & $-318.1 \pm 063.4$ 
& $100.0$ & $-156.7 \pm 246.4$ 
& \textcolor{black}{$-091.7$}
\\
\cmidrule(lr){1-3} \cmidrule(lr){4-5} \cmidrule(lr){6-7} \cmidrule(lr){8-9} \cmidrule(lr){10-11} \cmidrule(lr){12-13} \cmidrule(lr){14-14}
PPO & $\mathcal{N}$ & $\mathcal{U}(\theta)$ 
& $100.0$ & $-032.0 \pm 000.2$ 
& $100.0$ & $-031.5 \pm 007.2$ 
& $100.0$ & $-081.1 \pm 018.3$ 
& $00.0$ & $-287.9 \pm 068.8$ 
& $33.3$ & $-718.7 \pm 456.1$ 
& \textcolor{black}{$-261.7$}
\\
PPO \& UDR & $\mathcal{N}$ & $\delta(\theta))$ 
& $100.0$ & $-032.3 \pm 000.6$ 
& $100.0$ & $-084.0 \pm 007.8$ 
& $100.0$ & $-040.9 \pm 004.6$ 
& $00.0$ & $-435.4 \pm 111.9$ 
& $46.7$ & $-935.7 \pm 711.6$ 
& \textcolor{black}{$-370.0$}
\\
PPO & $\mathcal{U}$ & $\mathcal{U}(\theta)$ 
& $100.0$ & $-033.4 \pm 004.7$ 
& $99.0$ & $-039.7 \pm 045.7$ 
& $100.0$ & $-038.2 \pm 013.1$ 
& $00.0$ & $-183.8 \pm 018.0$ 
& $60.0$ & $-755.3 \pm 811.0$ 
& \textcolor{black}{$-219.4$}
\\
PPO \& UDR & $\mathcal{U}$ & $\mathcal{U}(\theta)$ 
& $100.0$ & $-035.6 \pm 003.1$ 
& $100.0$ & $-044.8 \pm 021.4$ 
& $100.0$ & $-048.5 \pm 006.2$ 
& $40.0$ & $-143.8 \pm 016.1$ 
& $100.0$ & \textcolor{black}{$\mathbf{-080.6 \pm 010.8}$}
& \textcolor{black}{$-054.4$}
\\
\bottomrule
\end{tabular*}
\vspace{-2.5em}
\label{table:cFVI_results}
\end{table*}

\subsection{Value Function Representation} \label{sec:value_function}
For the value function representation one can use any differentiable black-box function approximator. One common choice is a feed-forward network, i.e., \ac{mlp}, as this approximator enables an efficient computation of the value function gradient w.r.t. the network inputs. %

\medskip \noindent \textbf{Network Architecture.}
While the standard network architectures are sufficient, one can improve the performance by leveraging insights from the common control cost choices to structure the architecture. These structured representations are preferable as these limit the hypothesis space of the representable value functions. For continuous control tasks, the state reward is often a negative distance measure between $\vx_t$ and the desired state $\vx_{\text{des}}$. Hence, $q_c$ is negative definite, i.e., $q(\vx) < 0 \:\: \forall \:\: \vx \neq \vx_{\text{des}}$ and $q(\vx_{\text{des}}) = 0$. 
These properties imply that $V^{*}$ is a negative Lyapunov function, as $V^{*}$ is negative definite, $V^{*}(\vx_{\text{des}}) = 0$ and $\nabla_{x}V^{*}(\vx_{\text{des}}) = \mathbf{0}$ \cite{khalil2002nonlinear}. With a deep network a similar representation can be achieved by
\begin{align*}
    V(\vx; \: \psi) &= -\left(\vx -  \vx_{\text{des}}\right)^{\top} \mL(\vx;\:\psi) \mL(\vx;\:\psi)^{\top} \left(\vx -  \vx_{\text{des}}\right) 
\end{align*}
with $\mL$ being a lower triangular matrix with positive diagonal. This positive diagonal ensures that $\mL \mL^{\top}$ is positive definite. Simply applying a ReLu activation to the last layer of a deep network is not sufficient as this would also zero the actions for the positive values and $\nabla_{x}V^{*}(\vx_{\text{des}}) = \mathbf{0}$ cannot be guaranteed. The local quadratic representation guarantees that the gradient and hence, the action, is zero at the desired state. However, this representation can also not guarantee that the value function has only a single extrema at~$\vx_{\text{des}}$ as required by the Lyapunov theory. In practice, the local regularization of the quadratic structure to avoid high curvature approximations is sufficient as the global structure is defined by the value function target. $\mL$ is the mean of a deep network ensemble with $N$ independent parameters $\psi_i$. The ensemble mean smoothes the initial value function and is differentiable. Similar representations have been used by prior works in the safe reinforcement learning community~\cite{gu2016continuous, berkenkamp2017safe, richards2018lyapunov, kolter2019learning, chang2019neural,bharadhwaj2021csc}. It is important to point out that this network architecture is different from NAF~\cite{gu2016continuous} as NAF uses a Q-function that is quadratic w.r.t. the actions while we use a value function that is quadratic w.r.t. to the state. 

\medskip \noindent \textbf{Gradient Projection of State Transformations.}
Additional state transformations can be incorporated into the value function to enable easier representations. For example, the standard feature transform for a continuous revolute joint maps the joint state $\vx = [ \theta,\: \dot{\theta}]$ to $\vz = [\sin(\theta), \: \cos(\theta), \: \dot{\theta} ]$ can be incorporated to avoid the discontinuity at $\pm \pi$. In this case the transformed state $\vz$ lies on the tube shaped manifold. Therefore, the value function gradient must be projected into the tangent space, which is not guaranteed when using deep networks. For the state transformation $h(\vx)$ with $V(x;\: \psi) = f(h(\vx); \: \psi)$ this projection is described by $\nabla_{x}V(x;\: \psi) = \partial f(h(\vx); \: \psi)/ \partial h \:\: \partial h(\vx)/\partial \vx$ and the gradient points in a sensible direction.

\section{Experiments} \label{sec:experiments}
In the non-linear control experiments, we apply \ac{cfvi} and \ac{rfvi} to control under-actuated systems. The sim2real experiments test the policy robustness by transferring the learned policy to the physical system and compare their performance to the standard deep \ac{rl} approaches. More precisely, we want to answer the following questions:

\medskip\noindent
\textbf{Q1:} Can \ac{cfvi} \& \ac{rfvi} obtain the optimal policies that control the simulated system?

\medskip\noindent
\textbf{Q2:} What are the qualitative differences between the policies obtained by \ac{cfvi} \& \ac{rfvi}?

\medskip\noindent
\textbf{Q3:} Does the $n$-step value target improved the convergence speed of the optimal policy?

\medskip\noindent
\textbf{Q4:} How does the admissible set of adversaries affect the performance of the optimal policy?

\medskip\noindent
\textbf{Q5:} Is the locally quadratic value function architecture beneficial compared to a standard feed-forward network?

\medskip\noindent
\textbf{Q6:} Are the obtained policies robust enough to be transferred to the real system with varying physical parameters?

\subsection{Experimental Setup}
To answer these research questions, we apply the proposed algorithms to non-linear sim2real control of under-actuated systems and compare the performance to standard actor-critic deep RL approaches. 

\medskip \noindent 
\textbf{Systems.} The physical cartpole and Furuta pendulum are manufactured by Quanser \cite{quanser} and voltage controlled. For the approximate simulation model, we use the rigid-body dynamics model with the parameters supplied by the manufacturer. If we add negative weights to the pendulum, we attach the weights to the opposite lever of the pendulum. This moves the center of mass of the pendulum closer to the rotary axis. Therefore, this shift reduces the downward force and is equivalent to a lower pendulum mass.

\medskip
\noindent
\textbf{Baselines.} The performance is compared to the actor-critic deep RL methods: DDPG \cite{lillicrap2015continuous}, SAC \cite{haarnoja2018soft} and PPO \cite{schulman2017proximal}. The robustness evaluation is only performed for the best performing baselines on the nominal physical system. 
The initial state distribution is abbreviated by \{SAC, PPO, DDPG\}-U for a uniform distribution of the pendulum angle and \{SAC, PPO, DDPG\}-N for a Gaussian distribution. The baselines with Gaussian initial state distribution did not achieve robust performance on the nominal system. If the baseline uses uniform domain randomization the acronym is appended with UDR. For each of the baselines, the optimal time step is determined using a hyperparameter sweep. 

\medskip
\noindent
\textbf{Evaluation.}
To evaluate rFVI and the baselines we separately compare the state and action reward as these algorithms optimize a different objective. Hence, these algorithms trade-off state and action associated rewards differently. It is expected that the worst-case optimization uses higher actions to prevent deviation from the optimal trajectory. On the physical system, the performance is evaluated using the $25$th, $50$th, and $75$th~reward percentile as the reward distribution is multi-modal. 

\begin{figure*}[t]
    \centering
    \includegraphics[width=\textwidth]{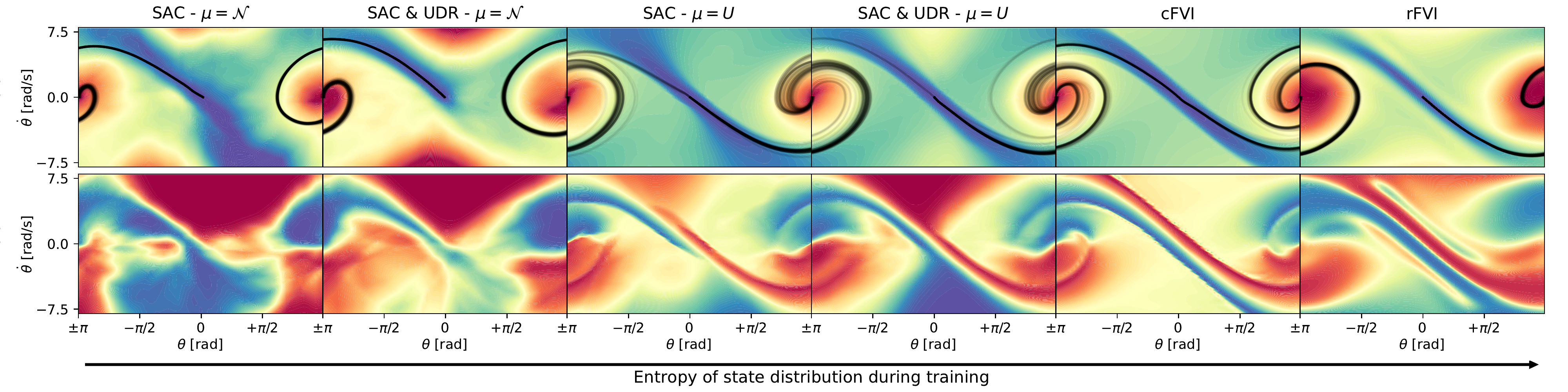}
    \vspace{-2.em}
    \caption{The optimal Value function $V^{*}$ and policy $\pi^{*}$ of rFVI, cFVI, and four different variations of SAC. All policies achieve nearly identical reward on the nominal dynamics model. The variations of SAC demonstrate the change of the policy when increasing the entropy of the state distribution during training. The entropy is increased by enlarging the initial state distribution $\mu$ and using domain randomization. For SAC and $\mu = \mathcal{N}(\pm \pi, \sigma)$ the optimal policy is only valid on the optimal trajectory. For SAC UDR and $\mu = \mathcal{U}(-\pi, +pi)$, the policy is applicable on the complete state domain. rFVI and cFVI perform value iteration on the compact state domain and naturally obtain an optimal policy applicable on the complete state-domain. rFVI adapts $V^{*}$ and $\pi^{*}$ to have a smaller ridge leading up to the upright pendulum and exerts higher actions when deviating from the optimal trajectory.}
    \label{fig:value_fun_pendulum}
    \vspace{-1.em}
\end{figure*} 

\begin{figure*}[t]
    \centering
    \includegraphics[width=\textwidth]{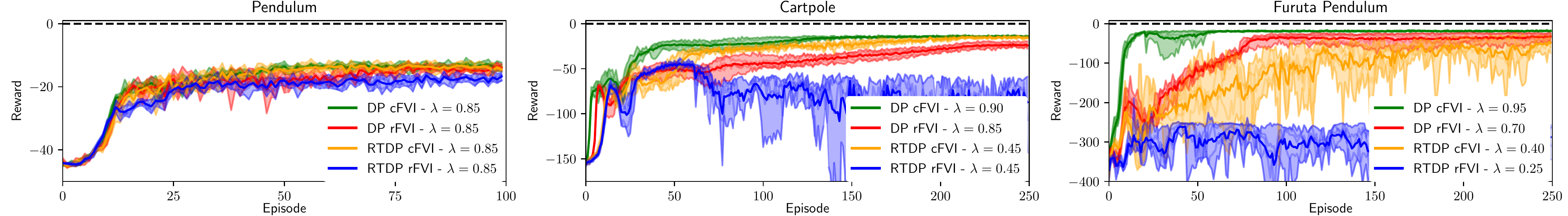}
    \vspace{-2.em}
    \caption{The learning curves for DP rFVI, DP cFVI, RTDP cFVI, and RTDP rFVI averaged over $5$ seeds. The shaded area displays the \emph{min/max} range between seeds. DP rFVI learns slower compared to DP cFVI on the carpole and Furuta pendulum as the adversary prevents learning. RTDP rFVI does not learn the task as the adversary is too strong for the online variant of rFVI despite using the identical admissible set as the offline variant DP rFVI.} 
    \label{fig:learning_curves}\vspace{-0.8em}
\end{figure*} 

\subsection{Experimental Results}
For each of the research questions we summarize the empirical results and answer the question in the respective section. Videos of all performed experiments are available at \url{https://sites.google.com/view/rfvi}

\medskip\noindent
\textbf{Q1 Control Performance.}
The learning curves for the three dynamical systems are shown in Figure~\ref{fig:learning_curves}. The quantitative comparison to the baselines is summarized in Table~\ref{table:cFVI_results}. DP~\ac{cfvi}, DP~\ac{rfvi} and RTDP~\ac{cfvi} obtain a policy that performs the swing-up and balances the pendulum. Only RTDP \ac{rfvi} does not obtain a successful policy for the cartpole and the Furuta pendulum (See Q4 for additional details). DP~\ac{cfvi} learns the fastest compared to the other variants. RTDP~\ac{cfvi} learns slower due to the required exploration while for DP~\ac{rfvi} the adversary slows down the convergence to the optimal value function. Quantitatively, the DP~\ac{cfvi} performs comparably to the best performing deep \ac{rl} algorithms. DP~\ac{rfvi} obtains a lower reward compared to DP~\ac{cfvi} due to the adversary. This reward difference is expected \ac{rfvi} minimizes the risk and hence, selects a more conservative solution with lower reward.

\medskip \noindent
\textbf{Q2 Policy Difference.}
The main difference between the policies obtained by \ac{cfvi} and \ac{rfvi} is that the robust variant converges to a stiffer policy. This stiffer policy exerts higher actions as soon as the system state leaves the optimal trajectory. This behavior is caused due to the adversary, which frequently perturbs the system state to leave the optimal trajectory. This difference can be visualized for the pendulum (Figure \ref{fig:value_fun_pendulum}). For the \ac{cfvi} policy the color gradient is much smoother, while for the \ac{rfvi} policy the color gradient changes abrupt between the maximum actions. Therefore, the \ac{rfvi} policy performs close to bang-bang control. Furthermore, the ridge leading up to the balancing point in the center is much smaller for \ac{rfvi} as the policy expects the adversary to push the state off the cliff, leading to a much lower reward. Therefore, the \ac{rfvi} policy is more conservative and uses a larger safety margin.

\medskip \noindent
\textbf{Q3 $\mathbf{N}$-Step Value Target.}
\begin{figure*}[t]
    \centering
    \includegraphics[width=\textwidth]{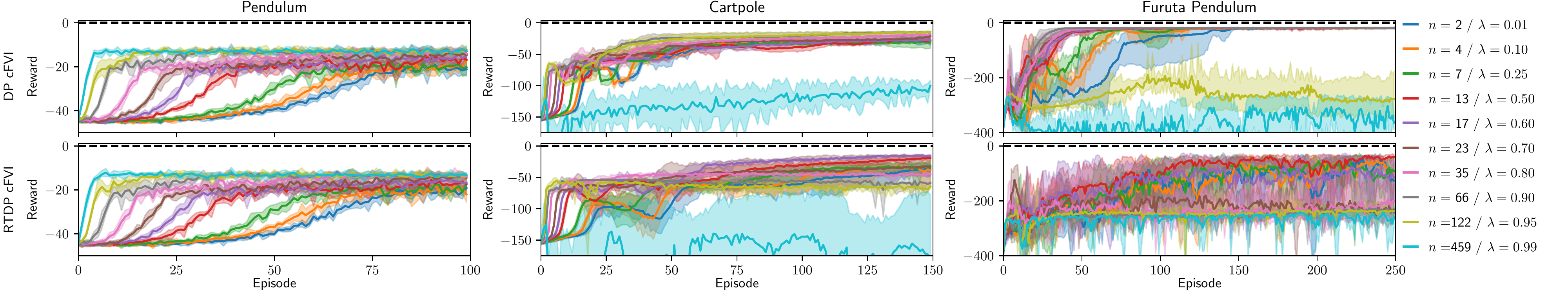}
    \vspace{-2.em}
    \caption{The learning curves averaged over $5$ seeds for the $n$-step value function target. 
    The shaded area displays the \emph{min/max} range between seeds. The step count is selected such that~$\lambda^{n} = 10^{-4}$. Increasing the horizon of the value function target increases the convergence rate to the optimal value function. For very long horizons the learning diverges as it over fits to the current value function approximation. Furthermore, the performance of the optimal policy also increases with roll out length.
    }
    \label{fig:ablation_lambda} \vspace{-0.8em}
\end{figure*} 
\begin{figure*}[h]
    \centering
    \includegraphics[width=\textwidth]{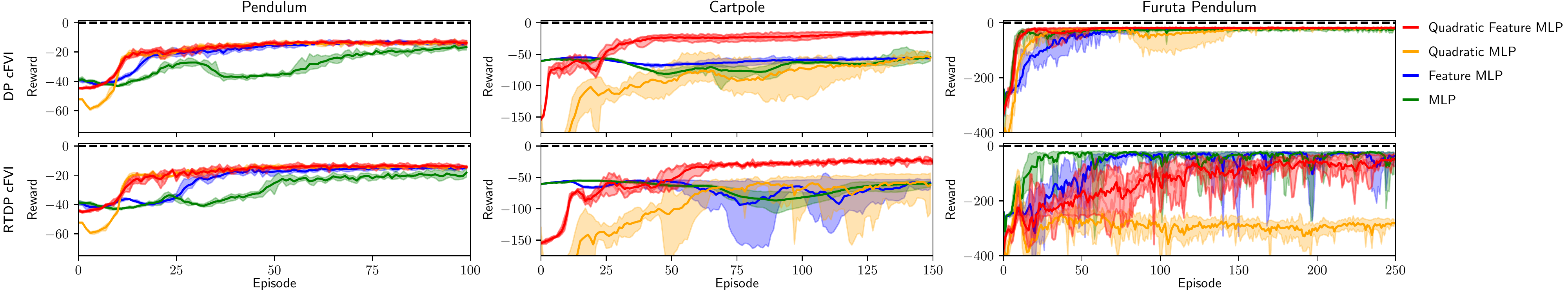}
    \vspace{-2.5em}
    \caption{The learning curves averaged over $5$ seeds for the different model architectures. The shaded area displays the \emph{min/max} range between seeds. All network architectures are capable of learning the value function and policy for most of the tasks. The locally quadratic network architecture increases learning speed compared to the baselines. The structured architecture acts as an inductive bias that shapes the exploration. The global maximum of the locally quadratic value function is guaranteed at $\vx_{\text{des}}$ and hence the initial policy performs hill-climbing towards this point.
    }
    \label{fig:ablation_architecture}
\end{figure*} 
The learning curves for varying $n$-steps is shown in Figure~\ref{fig:ablation_lambda}. The ablation study shows that the convergence speed increases when increasing the number of steps. It is important to point out that the learning speed increases w.r.t. to the number of episodes. In terms of computational cost, the number of steps increases the computational complexity by $n$ as the simulation is sequential. A surprising observation is that the cartpole and Furuta pendulum do not converge for very long trajectories despite using the true model. This degraded performance is due to overfitting to the value function approximation during training. As the value function is randomly initialized at the beginning, using long trajectories leading to potentially untrained regions in the state domain might lead to bad local optima. For the RTDP variant, this effect is amplified as the value function approximation is limited to the current state distribution and long trajectories are prone to leaving the state distribution. Therefore, the optimal step count is lower for RTDP than for DP. Furthermore, we also observed that \ac{rfvi} performs better when using slightly lower horizons compared to \ac{cfvi}. 

\medskip \noindent
\textbf{Q4 Admissible Set.} 
The learning curves for the linearly scaled admissible set are shown in figure \ref{fig:ablation_adversary}. One can observe that making the adversary more powerful, i.e., increasing $\alpha$, slightly decreases the obtained reward as the policy obtained by DP~\ac{rfvi} is more conservative. Furthermore, the learning speed is decreased. However, for all $\alpha$ the policy learns to complete the task. For RTDP~\ac{rfvi} increasing the admissible set of the adversary has a more significant impact. If the adversary becomes too powerful, the policy does not achieve the task. For example, for the cartpole the reward initially increases but drops for large admissible sets. For the Furuta pendulum the reward does not even increase at the beginning for larger $\alpha$. This behavior is due to the limited exploration of the policy. The policy does not discover that an action sequence exists to achieve the task despite the adversary. Therefore, the policy often converges to a pessimistic solution that does not exert any action. This policy is locally optimal as the adversary will always prevent the policy from completing the task. Hence, performing actions and incurring action penalties is not desirable. This effect is more pronounced for the Furuta pendulum as the system is more sensitive due to its small lengths and masses.

\medskip \noindent
\textbf{Q5 Network Architecture.} 
The learning curves for different network architectures are shown in Figure~\ref{fig:ablation_architecture}. The learning curves show that the locally quadratic network architecture using the sine/cosine transform for continuous revolute joints performs the most reliable. While the standard \ac{mlp} with feature transform performs well for the pendulum and Furuta pendulum, it does not obtain the optimal policy for the cartpole.

\medskip \noindent
\textbf{Q6 Sim2Real Transfer.} 
The results of the sim2real transfer are summarized in Figure~\ref{fig:sim2real_transfer} and Table~\ref{table:cFVI_results}. And extensive video documentation showing the performance on the physical systems is provided at \textcolor{blue}{\url{https://sites.google.com/view/rfvi}}. Both \ac{cfvi} and \ac{rfvi} can be transferred to the physical system and achieve a successful swing-up. On the nominal Furuta pendulum \ac{cfvi} obtains a higher reward than \ac{rfvi}, as \ac{rfvi} uses higher actions. On the nominal cartpole, \ac{rfvi} has a higher reward and higher success rate compared to \ac{cfvi}. In terms of robustness w.r.t. changes of the physical parameters, \ac{rfvi} achieves a reliable swing-up even when weights are added to the pendulum. Therefore, the obtained state reward is not affected by the varied mass (Figure \ref{fig:sim2real_transfer}). This difference can be nicely observed in figure \ref{fig:img_furuta}. The pendulum trajectory of the \ac{rfvi} policy is identical for all weight configurations. In contrast, the \ac{cfvi} policy needs multiple unsuccessful tries until the pendulum is upright and balanced for added weights $\geq 3$g. However, during balancing the Furuta pendulum, \ac{rfvi} performs bang-bang control, which leads to chattering due to minor delays in the control loop. In contrast to \ac{rfvi}, \ac{cfvi} keeps the pendulum still when balancing as the policy does not apply so strong actions. For the cartpole, \ac{rfvi} obtains a robust policies that performs a more consistent swing-up and balancing (Figure~\ref{fig:sim2real_transfer}). The difference between the policies is especially visible during the balancing of the cartpole. Due to the stiff \ac{rfvi} policy, the large actions immediately break the stiction of the linear actuator. Therefore, the cart is balanced at the center. For the \ac{cfvi} policy the cart oscillates around the center as the pendulum needs to fall a bit until the deviation is large enough to exert actions that break the stiction.

\medskip\noindent 
When comparing the performance of \ac{cfvi} and \ac{rfvi} to the deep \ac{rl} algorithms with uniform domain randomization, the proposed algorithms perform comparable or better. For example on the nominal system, \ac{cfvi} performs as good as the best deep \ac{rl} baseline. For example, on the Furuta pendulum most baselines complete the task but \ac{cfvi} obtains the highest reward and only PPO with domain randomization obtains a similar reward. On the robustness experiments, \ac{rfvi} performs better than the deep \ac{rl} baselines with domain randomization. The baselines including domain randomization start to fail when additional weights are added to the Furuta pendulum and the cartpole. 
\begin{figure*}[t]
    \centering
    \includegraphics[width=\textwidth]{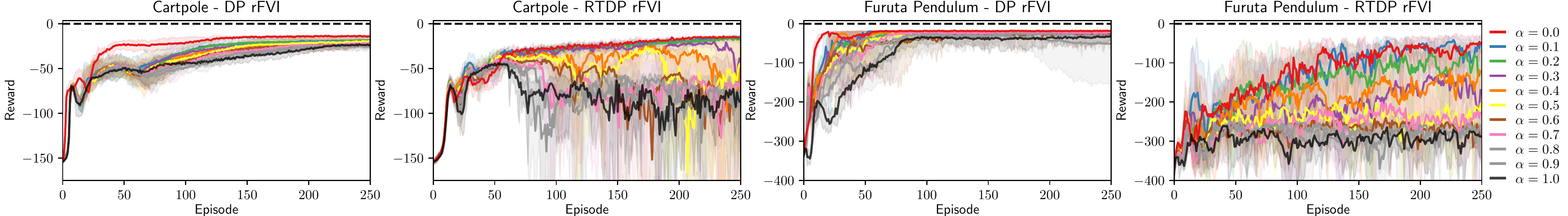}
    \vspace{-2.em}
    \caption{The learning curves for DP rFVI and RTDP rFVI with different adversary amplitudes averaged over $5$ seeds. The shaded area displays the \emph{min/max} range between seeds. The $\alpha$ corresponds to the percentage of the admissible set for all adversaries, i.e., with increasing $\alpha$ the adversary becomes more powerful. For DP rFVI the stronger adversaries do affect the final performance only marginally. For RTDP rFVI the adversaries become too powerful for small $\alpha$ and prevent learning of the optimal policy. This effect is especially distinct for the Furuta pendulum as this system is very sensitive due to the low masses. Therefore, DP rFVI can learn a good optimal policy despite very strong adversaries.  
    } 
    \label{fig:ablation_adversary}\vspace{-0.8em}
\end{figure*} 

\begin{figure*}[t]
    \centering
    \includegraphics[width=\textwidth]{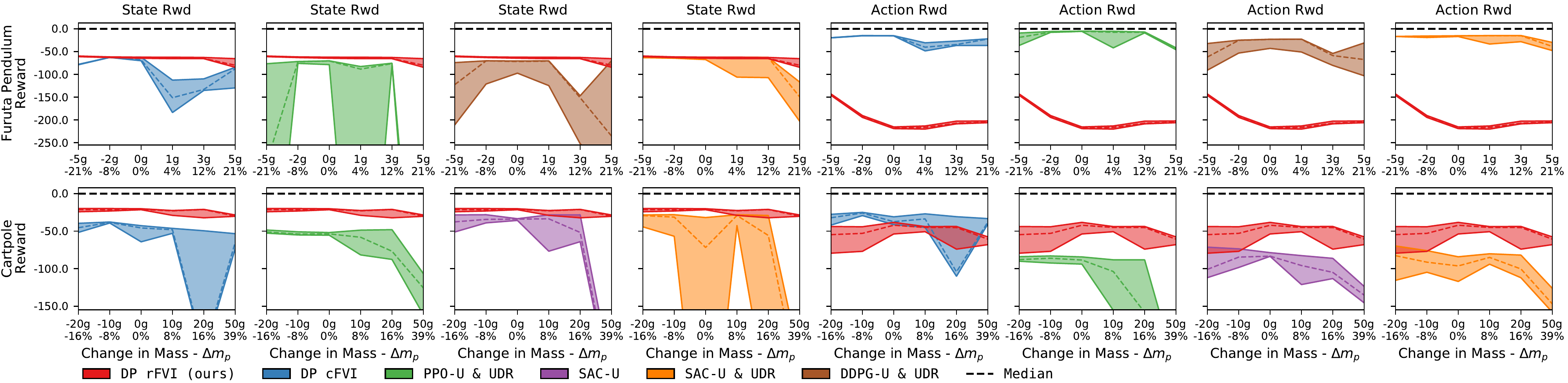}
    \vspace{-2.em}
    \caption{The $25$th, $50$th, and $75$th reward percentile for the physical Furuta pendulum and cartpole with varied pendulum weights. DP rFVI achieves a higher state reward for real-world systems compared to the baselines. For the different weights, the reward remains nearly constant. For the Furuta pendulum the action cost is significantly higher compared to the baselines as the DP rFVI causes a chattering during balancing due to the high actions and minor time delays in the control loop. If only the swing-up phase is considered the rewards are comparable.}
    \label{fig:sim2real_transfer}\vspace{-0.8em}
\end{figure*} 

\section{Conclusion}\label{sec:conclusion}
In this section, we will first discuss the surprising experimental observations, limitations and future extensions of the proposed algorithms. Afterwards, we embed our contributions within the existing literature by relating our methods to the related work. Finally we summarize the contributions of this paper. 
\subsection{Discussion}
To obtain optimal and robust policies, we optimized the worst case reward rather than the expected reward, used dynamic programming on the complete state domain rather than local exploration and assumed a known dynamics model. While these assumptions enabled us to learn good policies for the sim2real transfer, these assumptions also have several drawbacks. We want to discuss the consequences of these assumptions and propose extensions to alleviate these limitations in future work. 

\medskip \noindent
\textbf{Worst Case Optimization.} The experiments showed that the worst-case optimization increases the policy robustness. However, the policy stiffness can also cause new problems. For example, the high stiffness of the policy makes the policy more susceptible to small control loops delays leading to chattering as observed on the Furuta Pendulum. Therefore, the worst-case optimization is a double-edged sword that depending on the system might be beneficial or cause additional problems. In addition, the admissible set must be manually tuned to yield not overly conservative/pessimistic policies. One approach to overcome this limitation is to learn the magnitude of the admissible using data. In this case, one would interleave the offline planning with the online evaluation on the physical system. In every iteration, one would use the obtained real-world data to update the admissible set. Until one approaches a policy that can solve the task but is not overly conservative. Within the domain randomization community, this automatic tuning of the perturbed parameters has become widely used and improved performance. Furthermore, one could parametrize the admissible set to be state-dependent to obtain higher robustness only when needed.

\medskip \noindent
\textbf{State Distribution \& Dimensionality.} The experiments showed that the state distribution significantly affects the policy robustness and performance. For the sim2real transfer, only the baselines with a uniform initial state distribution achieved a successful transfer to the physical system~(Table \ref{table:cFVI_results}). Furthermore, the dynamic programming variants, sampling uniformly from the complete state domain, performed much better than the real-time dynamic programming variants. This increased policy robustness is intuitive as the dynamic programming mitigates the distribution shift between simulation and the real-world system. However, the dynamic programming approaches cannot scale to high dimensional systems as sampling the complete state domain becomes unfeasible for such systems. Therefore, an interesting future research question is how to obtain a sufficiently large state distribution such that the policy is robust when transferred to the physical system. This question is different from the traditional exploration exploitation trade-off as this question focuses more on minimizing the distribution mismatch.

\medskip \noindent
\textbf{Exploration.} To improve the RTDP variants and scale to higher dimensional systems, the exploration of the proposed algorithms must be improved as the algorithms sometimes do not discover the optimal solution. This problem is especially pronounced for \ac{rfvi} as in this case the adversary prevents discovering the optimal solution and the policy converges to a pessimistic policy (Figure \ref{fig:ablation_adversary}). The dynamic programming variants are not affected by this as this approach does not require exploration. The main problem for the exploration is the high-frequency sampling of the exploration noise required to solve the integrals. In this case, the exploration noise averages out and does not lead to diverse exploration. One approach to solve this would be to use model-predictive control for exploration. In this case, one would optimize the action sequence online and use the actions of the optimal policy only as prior.

\begin{figure*}[t]
    \centering
    \includegraphics[width=\textwidth]{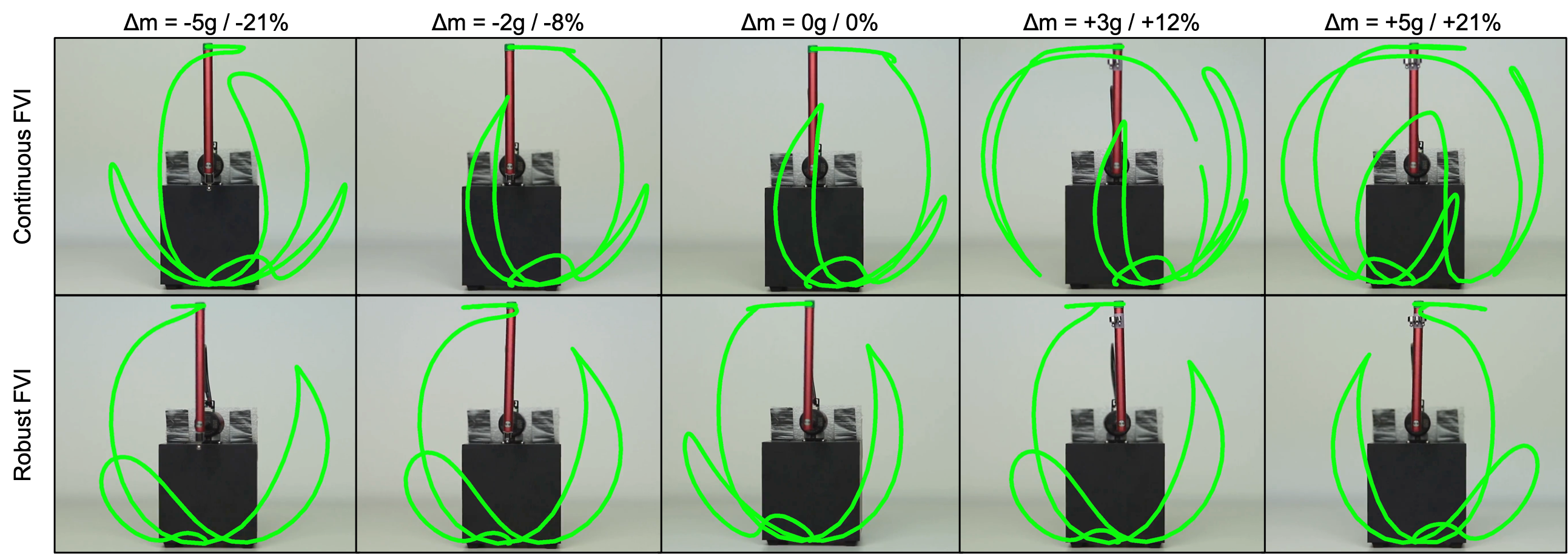}
    \vspace{-2.em}
    \caption{The tracked trajectories for DP rFVI and DP cFVI on the Furuta pendulum for different pendulum weights. The trajectories of rFVI do not significantly change when the pendulum mass is altered. For DP cFVI the trajectories start to change when additional weight is added. For these system dynamics, DP cFVI requires some failed swing-ups until the policy can balance the pendulum.}
    \label{fig:img_furuta}\vspace{-0.8em}
\end{figure*}  

\medskip \noindent
\textbf{Known Dynamics Model.} All performed experiments used the analytic equations of motion provided by the manufacturer as the model. Therefore, we assumed that the model is known. The proposed policy optimization can be combined with model learning to learn the model from data. For example, a continuous-time control-affine system dynamics can be learned using Deep Lagrangian Networks~\cite{lutter2018deep, Lutter2019Energy} or the SymODEN extension Hamiltonian Neural Network extension~\cite{zhong2019symplectic}. In future work, one should combine continuous-time policy optimization with continuous-time model learning.

\subsection{Related Work} \label{sec:related_work}
In the related work section we embed the proposed algorithms within the existing literature. We summarize the existing work on continuous-time RL, value iteration and robust policy optimization and highlight the differences of our approach compared to prior art. 

\medskip\noindent
\textbf{Continuous-time Reinforcement Learning.} The seminal work of Doya \cite{doya2000reinforcement} introduced continuous-time RL. Since then, various approaches have been proposed to solve the Hamilton-Jacobi-Bellman (HJB) differential equation with the machine learning toolset. These methods can be divided into the trajectory and state-space-based methods. Trajectory-based methods solve the HJB along a trajectory to obtain the optimal trajectory. For example, path integral control uses the non-linear, control-affine dynamics with quadratic action costs to simplify the HJB to a linear partial differential equation~\cite{kappen2005linear, todorov2007linearly, theodorou2010reinforcement, pan2014model}. This differential equation can be transformed to a path-integral using the Feynman-Kac formulae. The path-integral can then be solved using Monte Carlo sampling to obtain the optimal state and action sequence. Recently, this approach has been used in combination with deep networks~\cite{rajagopal2016neural, pereira2019learning, pereira2020safe}. State-space-based methods solve the HJB globally to obtain an optimal non-linear controller applicable on the complete state domain. Classical approaches discretize the continuous spaces into a grid and solve the HJB or the robust Hamilton-Jacobi-Isaac (HJI) using a PDE solver \cite{bansal2017hamilton}. In contrast, machine learning-based methods use function approximation and sampled states to solve the HJB. For example, regression-based approaches solved the HJB by fitting a radial-basis-function networks \cite{doya2000reinforcement}, deep networks \cite{tassa2007least, lutter2019hjb, kim2020L4DC, kim2020hamilton}, kernels \cite{hennig2011optimal} or polynomial functions \cite{yang2014reinforcement, liu2014neural} to minimize the HJB residual.

\medskip\noindent
The presented work is closely related to the work of Kim et. al. \cite{kim2020L4DC, kim2020hamilton}. While we present a model-based approach that leverages the dynamics model to perform value iteration, \cite{kim2020L4DC, kim2020hamilton} propose a model-free approach that uses Q-iteration. To obtain the Q-function, which usually does not exist for continuous-time RL~\cite{doya2000reinforcement}, the authors incorporate a Lipschitz constraint that limits the change of the actions. Therefore, one does not control the actions but the change in the action. In this case, the optimal change in the action is the rescaled gradient of the Q-function w.r.t. the actions.

\medskip \noindent
\textbf{Fitted Value Iteration.}
Fitted Value Iteration (FVI)~\cite{boyan1994generalization, baird1995residual, tsitsiklis1996feature, munos2008finite} and the model-free counterpart fitted Q-Iteration~\cite{tesauro1992practical, ernst2005tree, massoud2009regularized, riedmiller2005neural, mnih2015human} were previously only applicable to discrete actions and continuous states. These approaches were limited to discrete actions as they could not solve the maximum for continuous actions. Only QT-Opt~\cite{kalashnikov2018qt} and NAF \cite{gu2016continuous} applied fitted Q-Iteration to continuous actions. QT-Opt solves the maximization in each step using the particle-based cross-entropy method (CEM). However, this approach requires solving an expensive optimization problem within each step. In contrast, NAF uses a specific Q-function parametrization such that the Q-function is quadratic w.r.t. the actions. Therefore, the maximum can be easily computed due to the quadratic form. By leveraging the continuous-time formulation and the control-affine dynamics of many robotics systems, we showed that this maximization can be solved in closed form. Therefore, we extended FVI to continuous actions for high-frequency control tasks. For very low control frequencies this approximation might not be sufficient.

\medskip\noindent
For continuous actions, current RL methods use policy iteration (PI) rather than VI \cite{schulman2015high, lillicrap2015continuous}. PI evaluates the value function of the current policy and hence, uses the action of the policy to compute the value function target. Therefore, PI circumvents the maximization required of VI (\Eqref{eq:fvi_update}). In contrast to the PI methods, our proposed method is 'policy-free' as the value function directly implies the policy. Therefore, \ac{cfvi} and rFVI do not require the additional optimization to improve the policy as the PI-based methods.

\medskip \noindent
\textbf{Robust Policies for Sim2Real.} Learning robust policies to bridge the simulation to reality gap has been approached by (1) changing the optimization objective \cite{borkar2001sensitivity, chow2015risk,bharadhwaj2021csc}, (2) using an adversary to optimize the worst-case performance~\cite{tamar2013scaling, isaacs1999differential, bansal2017hamilton,harrison2017adapt, hsu2021safety, morimoto2005robust, pinto2017robust, pinto2017supervision, mandlekar2017adversarially, tessler2019action} and (3) randomizing the simulation~\cite{andrychowicz2020learning, muratore2021data, muratore2021data, xie2021dynamics, chebotar2019closing, ramos2019bayessim}. In this paper, we focus on the adversarial formulation which has been used for continuous control tasks. For example, Pinto et. al. \cite{pinto2017robust, pinto2017supervision} used a separate agent as an adversary controlling an additive control input. This adversary maximized the negative reward using a standard actor-critic learning algorithm. The agent and adversary do not share any information. Therefore, an additional optimization is required to optimize the adversary. Mandlekar et. al.~\cite{mandlekar2017adversarially} used the auxiliary loss to maximize the policy actions. In contrast to these approaches, our approach is model-based instead of model-free. The model allows us to express the adversarial perturbations using analytic expressions derived directly from the Hamilton-Jacobi-Isaacs (HJI) equation. Therefore, our approach shares knowledge between the actor and adversary due to a shared value function and requires no additional optimization. 

\medskip\noindent 
Our proposed approach is similar to Morimoto and Doya \cite{morimoto2005robust}. In contrast to this work, we extend the analytic solutions to state, action, observation, and model disturbances, do not require a control-affine disturbance model, and use the constrained formulation rather than the penalized formulation.

\subsection{Summary}
We have proposed continuous fitted value iteration (cFVI) and robust fitted value iteration (rFVI). These algorithms can be used to solve the Hamilton-Jacobi-Bellman (HJB) differential equation and the Hamilton-Jacobi-Isaacs (HJI) equation for continuous states and action spaces without grid-based samples. To derive these algorithms, we extended the existing derivations~\cite{lyshevski1998optimal, doya2000reinforcement, morimoto2005robust} of the optimal policy to a wider class of reward functions and introduced the solution for the optimal adversary. Instead of solving these equations directly using the regression techniques of machine learning as prior methods~\cite{doya2000reinforcement, tassa2007least, lutter2019hjb}, we used value iteration to obtain a more reliable optimization. Thereby, we also extended fitted value iteration to continuous actions and adversarial \ac{rl}. Previously, fitted value iteration was mainly applicable to discrete actions. 
The continuous control experiments showed that both algorithms can obtain the optimal policy and obtain identical reward as the deep reinforcement learning methods. Furthermore, the policies can be transferred to the physical systems. The \ac{rfvi} policies are more robust when transferred to the physical system by applying higher actions. In addition, we provided an extensive discussion of the shortcomings and proposed approaches for future work to address these limitations.

\ifCLASSOPTIONcompsoc
  \section*{Acknowledgments}
\else
  \section*{Acknowledgment}
\fi

\noindent The research was partially conducted during the internship of M. Lutter at NVIDIA.
M. Lutter, B. Belousov and J. Peters received funding from the European Union’s Horizon 2020 research and innovation
program under grant agreement No \#640554 (SKILLS4ROBOTS).
A. Garg was partially supported by CIFAR AI Chair.
Furthermore, we want to thank the open-source projects SimuRLacra~\cite{simurlacra}, MushroomRL~\cite{deramo2020mushroomrl}, NumPy~\cite{numpy} and PyTorch~\cite{pytorch}.

\ifCLASSOPTIONcaptionsoff
  \newpage
\fi



\bibliographystyle{IEEEtran}
\bibliography{IEEEabrv, refs}

\begin{thebibliography}{10}
\providecommand{\url}[1]{#1}
\csname url@samestyle\endcsname
\providecommand{\newblock}{\relax}
\providecommand{\bibinfo}[2]{#2}
\providecommand{\BIBentrySTDinterwordspacing}{\spaceskip=0pt\relax}
\providecommand{\BIBentryALTinterwordstretchfactor}{4}
\providecommand{\BIBentryALTinterwordspacing}{\spaceskip=\fontdimen2\font plus
\BIBentryALTinterwordstretchfactor\fontdimen3\font minus
  \fontdimen4\font\relax}
\providecommand{\BIBforeignlanguage}[2]{{%
\expandafter\ifx\csname l@#1\endcsname\relax
\typeout{** WARNING: IEEEtran.bst: No hyphenation pattern has been}%
\typeout{** loaded for the language `#1'. Using the pattern for}%
\typeout{** the default language instead.}%
\else
\language=\csname l@#1\endcsname
\fi
#2}}
\providecommand{\BIBdecl}{\relax}
\BIBdecl

\bibitem{liberzon2011calculus}
D.~Liberzon, \emph{Calculus of variations and optimal control theory: a concise
  introduction}.\hskip 1em plus 0.5em minus 0.4em\relax Princeton University
  Press, 2011.

\bibitem{carter2001foundations}
M.~Carter, \emph{Foundations of mathematical economics}.\hskip 1em plus 0.5em
  minus 0.4em\relax MIT press, 2001.

\bibitem{caputo2005foundations}
M.~R. Caputo and M.~R. Caputo, \emph{Foundations of dynamic economic analysis:
  optimal control theory and applications}.\hskip 1em plus 0.5em minus
  0.4em\relax Cambridge University Press, 2005.

\bibitem{lavalle2006planning}
S.~M. LaValle, \emph{Planning algorithms}.\hskip 1em plus 0.5em minus
  0.4em\relax Cambridge university press, 2006.

\bibitem{chen2015safe}
M.~Chen, J.~F. Fisac, S.~Sastry, and C.~J. Tomlin, ``Safe sequential path
  planning of multi-vehicle systems via double-obstacle
  {Hamilton-Jacobi-Isaacs} variational inequality,'' in \emph{European Control
  Conference (ECC)}, 2015.

\bibitem{bansal2017hamilton}
S.~Bansal, M.~Chen, S.~Herbert, and C.~J. Tomlin, ``{Hamilton-Jacobi}
  reachability: A brief overview and recent advances,'' \emph{Conference on
  Decision and Control (CDC)}, 2017.

\bibitem{fisac2018general}
J.~F. Fisac, A.~K. Akametalu, M.~N. Zeilinger, S.~Kaynama, J.~Gillula, and
  C.~J. Tomlin, ``A general safety framework for learning-based control in
  uncertain robotic systems,'' \emph{IEEE Transactions on Automatic Control},
  2018.

\bibitem{doya2000reinforcement}
K.~Doya, ``Reinforcement learning in continuous time and space,'' \emph{Neural
  computation}, 2000.

\bibitem{morimoto2005robust}
J.~Morimoto and K.~Doya, ``Robust reinforcement learning,'' \emph{Neural
  computation}, 2005.

\bibitem{tassa2007least}
Y.~Tassa and T.~Erez, ``Least squares solutions of the {HJB} equation with
  neural network value-function approximators,'' \emph{IEEE Transactions on
  Neural Networks}, 2007.

\bibitem{lutter2019hjb}
M.~Lutter, B.~Belousov, K.~Listmann, D.~Clever, and J.~Peters, ``{HJB} optimal
  feedback control with deep differential value functions and action
  constraints,'' in \emph{Conference on Robot Learning (CoRL)}, 2019.

\bibitem{yang2014reinforcement}
X.~Yang, D.~Liu, and D.~Wang, ``Reinforcement learning for adaptive optimal
  control of unknown continuous-time nonlinear systems with input
  constraints,'' \emph{International Journal of Control}, 2014.

\bibitem{liu2014neural}
D.~Liu, D.~Wang, F.-Y. Wang, H.~Li, and X.~Yang, ``Neural-network-based online
  hjb solution for optimal robust guaranteed cost control of continuous-time
  uncertain nonlinear systems,'' \emph{IEEE transactions on cybernetics}, 2014.

\bibitem{kim2020L4DC}
J.~Kim and I.~Yang, ``{Hamilton-Jacobi-Bellman} equations for {Q-Learning} in
  continuous time,'' in \emph{Learning for Dynamics and Control}, 2020.

\bibitem{kim2020hamilton}
J.~Kim, J.~Shin, and I.~Yang, ``{Hamilton-Jacobi} deep {Q-Learning} for
  deterministic continuous-time systems with lipschitz continuous controls,''
  \emph{arXiv preprint arXiv:2010.14087}, 2020.

\bibitem{lutter2021cfvi}
M.~Lutter, S.~Mannor, J.~Peters, D.~Fox, and A.~Garg, ``{Value Iteration in
  Continuous Actions, States and Time},'' in \emph{International Conference on
  Machine Learning (ICML)}, 2021.

\bibitem{lutter2021robust}
------, ``Robust value iteration for continuous control tasks,''
  \emph{Robotics: Science and Systems (RSS)}, 2021.

\bibitem{lyshevski1998optimal}
S.~E. Lyshevski, ``Optimal control of nonlinear continuous-time systems: design
  of bounded controllers via generalized nonquadratic functionals,'' in
  \emph{American Control Conference (ACC)}.\hskip 1em plus 0.5em minus
  0.4em\relax IEEE, 1998.

\bibitem{kirk2004optimal}
D.~E. Kirk, \emph{Optimal control theory: an introduction}.\hskip 1em plus
  0.5em minus 0.4em\relax Courier Corporation, 1970.

\bibitem{kappen2005linear}
H.~J. Kappen, ``Linear theory for control of nonlinear stochastic systems,''
  \emph{Physical review letters}, 2005.

\bibitem{todorov2007linearly}
E.~Todorov, ``Linearly-solvable markov decision problems,'' in \emph{Advances
  in neural information processing systems}, 2007.

\bibitem{isaacs1999differential}
R.~Isaacs, \emph{Differential games: a mathematical theory with applications to
  warfare and pursuit, control and optimization}.\hskip 1em plus 0.5em minus
  0.4em\relax Courier Corporation, 1999.

\bibitem{zhang2020robust}
H.~Zhang, H.~Chen, C.~Xiao, B.~Li, D.~Boning, and C.-J. Hsieh, ``Robust deep
  reinforcement learning against adversarial perturbations on observations,''
  \emph{arXiv preprint arXiv:2003.08938}, 2020.

\bibitem{heger1994consideration}
M.~Heger, ``Consideration of risk in reinforcement learning,'' in \emph{Machine
  Learning Proceedings}.\hskip 1em plus 0.5em minus 0.4em\relax Elsevier, 1994.

\bibitem{mandlekar2017adversarially}
A.~Mandlekar, Y.~Zhu, A.~Garg, L.~Fei-Fei, and S.~Savarese, ``Adversarially
  robust policy learning: Active construction of physically-plausible
  perturbations,'' in \emph{International Conference on Intelligent Robots and
  Systems (IROS)}, 2017.

\bibitem{littman1994markov}
M.~L. Littman, ``Markov games as a framework for multi-agent reinforcement
  learning,'' in \emph{Machine learning proceedings}.\hskip 1em plus 0.5em
  minus 0.4em\relax Elsevier, 1994.

\bibitem{nilim2005robust}
A.~Nilim and L.~El~Ghaoui, ``Robust control of markov decision processes with
  uncertain transition matrices,'' \emph{Operations Research}, 2005.

\bibitem{pinto2017robust}
L.~Pinto, J.~Davidson, R.~Sukthankar, and A.~Gupta, ``Robust adversarial
  reinforcement learning,'' in \emph{International Conference on Machine
  Learning (ICML)}, 2017.

\bibitem{pinto2017supervision}
L.~Pinto, J.~Davidson, and A.~Gupta, ``Supervision via competition: Robot
  adversaries for learning tasks,'' in \emph{International Conference on
  Robotics and Automation (ICRA)}, 2017.

\bibitem{tessler2019action}
C.~Tessler, Y.~Efroni, and S.~Mannor, ``Action robust reinforcement learning
  and applications in continuous control,'' in \emph{International Conference
  on Machine Learning (ICML)}, 2019.

\bibitem{pattanaik2017robust}
A.~Pattanaik, Z.~Tang, S.~Liu, G.~Bommannan, and G.~Chowdhary, ``Robust deep
  reinforcement learning with adversarial attacks,'' \emph{arXiv preprint
  arXiv:1712.03632}, 2017.

\bibitem{gleave2019adversarial}
A.~Gleave, M.~Dennis, C.~Wild, N.~Kant, S.~Levine, and S.~Russell,
  ``Adversarial policies: Attacking deep reinforcement learning,'' \emph{arXiv
  preprint arXiv:1905.10615}, 2019.

\bibitem{xu2012robustness}
H.~Xu and S.~Mannor, ``Robustness and generalization,'' \emph{Machine
  learning}, 2012.

\bibitem{astrom2008event}
K.~J. Astr{\"o}m, ``Event based control,'' in \emph{Analysis and design of
  nonlinear control systems}.\hskip 1em plus 0.5em minus 0.4em\relax Springer,
  2008.

\bibitem{de2000elucidation}
J.~A. De~Don{\'a} and G.~C. Goodwin, ``Elucidation of the state-space regions
  wherein model predictive control and anti-windup strategies achieve identical
  control policies,'' in \emph{American Control Conference (ACC)}.\hskip 1em
  plus 0.5em minus 0.4em\relax IEEE, 2000.

\bibitem{boyd2004convex}
S.~Boyd and L.~Vandenberghe, \emph{Convex optimization}.\hskip 1em plus 0.5em
  minus 0.4em\relax Cambridge University Press, 2004.

\bibitem{ching2010quasilinear}
S.~Ching, Y.~Eun, C.~Gokcek, P.~T. Kabamba, and S.~M. Meerkov,
  \emph{Quasilinear Control: Performance Analysis and Design of Feedback
  Systems with Nonlinear Sensors and Actuators}.\hskip 1em plus 0.5em minus
  0.4em\relax Cambridge University Press, 2010.

\bibitem{fleming2006controlled}
W.~H. Fleming and H.~M. Soner, \emph{Controlled Markov processes and viscosity
  solutions}.\hskip 1em plus 0.5em minus 0.4em\relax Springer Science \&
  Business Media, 2006.

\bibitem{boyan1994generalization}
J.~Boyan and A.~Moore, ``Generalization in reinforcement learning: Safely
  approximating the value function,'' \emph{Advances in neural information
  processing systems (NeurIPS)}, 1994.

\bibitem{baird1995residual}
L.~Baird, ``Residual algorithms: Reinforcement learning with function
  approximation,'' in \emph{Machine Learning Proceedings}.\hskip 1em plus 0.5em
  minus 0.4em\relax Elsevier, 1995.

\bibitem{tsitsiklis1996feature}
J.~N. Tsitsiklis and B.~Van~Roy, ``Feature-based methods for large scale
  dynamic programming,'' \emph{Machine Learning}, 1996.

\bibitem{munos2008finite}
R.~Munos and C.~Szepesv{\'a}ri, ``Finite-time bounds for fitted value
  iteration,'' \emph{Journal of Machine Learning Research}, 2008.

\bibitem{bellman1957dynamic}
R.~Bellman, \emph{Dynamic Programming}.\hskip 1em plus 0.5em minus 0.4em\relax
  Princeton University Press, 1957.

\bibitem{puterman1994markov}
M.~L. Puterman, \emph{Markov decision processes: discrete stochastic dynamic
  programming}.\hskip 1em plus 0.5em minus 0.4em\relax John Wiley \& Sons,
  1994.

\bibitem{tesauro1992practical}
G.~Tesauro, ``Practical issues in temporal difference learning,'' \emph{Machine
  learning}, 1992.

\bibitem{ernst2005tree}
D.~Ernst, P.~Geurts, and L.~Wehenkel, ``Tree-based batch mode reinforcement
  learning,'' \emph{Journal of Machine Learning Research}, 2005.

\bibitem{massoud2009regularized}
A.~massoud Farahmand, M.~Ghavamzadeh, C.~Szepesv{\'a}ri, and S.~Mannor,
  ``Regularized fitted q-iteration for planning in continuous-space markovian
  decision problems,'' in \emph{American Control Conference (ACC)}.\hskip 1em
  plus 0.5em minus 0.4em\relax IEEE, 2009.

\bibitem{riedmiller2005neural}
M.~Riedmiller, ``Neural fitted q iteration--first experiences with a data
  efficient neural reinforcement learning method,'' in \emph{European
  Conference on Machine Learning}.\hskip 1em plus 0.5em minus 0.4em\relax
  Springer, 2005.

\bibitem{mnih2015human}
V.~Mnih, K.~Kavukcuoglu, D.~Silver, A.~A. Rusu, J.~Veness, M.~G. Bellemare,
  A.~Graves, M.~Riedmiller, A.~K. Fidjeland, G.~Ostrovski \emph{et~al.},
  ``Human-level control through deep reinforcement learning,'' \emph{Nature},
  2015.

\bibitem{sutton1998introduction}
R.~S. Sutton, A.~G. Barto \emph{et~al.}, \emph{Introduction to reinforcement
  learning}.\hskip 1em plus 0.5em minus 0.4em\relax MIT press Cambridge, 1998.

\bibitem{schulman2015high}
J.~Schulman, P.~Moritz, S.~Levine, M.~Jordan, and P.~Abbeel, ``High-dimensional
  continuous control using generalized advantage estimation,'' \emph{arXiv
  preprint arXiv:1506.02438}, 2015.

\bibitem{schulman2017proximal}
J.~Schulman, F.~Wolski, P.~Dhariwal, A.~Radford, and O.~Klimov, ``Proximal
  policy optimization algorithms,'' \emph{arXiv preprint arXiv:1707.06347},
  2017.

\bibitem{feinberg2018model}
V.~Feinberg, A.~Wan, I.~Stoica, M.~I. Jordan, J.~E. Gonzalez, and S.~Levine,
  ``Model-based value estimation for efficient model-free reinforcement
  learning,'' \emph{arXiv preprint arXiv:1803.00101}, 2018.

\bibitem{buckman2018sample}
J.~Buckman, D.~Hafner, G.~Tucker, E.~Brevdo, and H.~Lee, ``Sample-efficient
  reinforcement learning with stochastic ensemble value expansion,'' in
  \emph{Advances in Neural Information Processing Systems (NeurIPS)}, 2018.

\bibitem{barto1995learning}
A.~G. Barto, S.~J. Bradtke, and S.~P. Singh, ``Learning to act using real-time
  dynamic programming,'' \emph{Artificial intelligence}, 1995.

\bibitem{gulcehre2020rl}
C.~Gulcehre, Z.~Wang, A.~Novikov, T.~L. Paine, S.~G. Colmenarejo, K.~Zolna,
  R.~Agarwal, J.~Merel, D.~Mankowitz, C.~Paduraru \emph{et~al.}, ``Rl
  unplugged: Benchmarks for offline reinforcement learning,'' \emph{arXiv
  preprint arXiv:2006.13888}, 2020.

\bibitem{muratore2018domain}
F.~Muratore, F.~Treede, M.~Gienger, and J.~Peters, ``Domain randomization for
  simulation-based policy optimization with transferability assessment,'' in
  \emph{Conference on Robot Learning (CoRL)}, 2018.

\bibitem{muratore2021data}
F.~Muratore, C.~Eilers, M.~Gienger, and J.~Peters, ``Data-efficient domain
  randomization with bayesian optimization,'' \emph{IEEE Robotics and
  Automation Letters (RAL)}, 2021.

\bibitem{lutter2021value}
M.~Lutter, S.~Mannor, J.~Peters, D.~Fox, and A.~Garg, ``Value iteration in
  continuous actions, states and time,'' \emph{International Conference on
  Machine Learning (ICML)}, 2021.

\bibitem{khalil2002nonlinear}
H.~K. Khalil and J.~W. Grizzle, \emph{Nonlinear systems}.\hskip 1em plus 0.5em
  minus 0.4em\relax Prentice hall Upper Saddle River, NJ, 2002.

\bibitem{gu2016continuous}
S.~Gu, T.~Lillicrap, I.~Sutskever, and S.~Levine, ``Continuous deep q-learning
  with model-based acceleration,'' in \emph{International Conference on Machine
  Learning (ICML)}, 2016.

\bibitem{berkenkamp2017safe}
F.~Berkenkamp, M.~Turchetta, A.~Schoellig, and A.~Krause, ``Safe model-based
  reinforcement learning with stability guarantees,'' in \emph{Advances in
  neural information processing systems}, 2017.

\bibitem{richards2018lyapunov}
S.~M. Richards, F.~Berkenkamp, and A.~Krause, ``The lyapunov neural network:
  Adaptive stability certification for safe learning of dynamical systems,''
  \emph{arXiv preprint arXiv:1808.00924}, 2018.

\bibitem{kolter2019learning}
J.~Z. Kolter and G.~Manek, ``Learning stable deep dynamics models,'' in
  \emph{Advances in Neural Information Processing Systems (NeurIPS)}, 2019.

\bibitem{chang2019neural}
Y.-C. Chang, N.~Roohi, and S.~Gao, ``Neural lyapunov control,'' in
  \emph{Advances in Neural Information Processing Systems (NeurIPS)}, 2019.

\bibitem{bharadhwaj2021csc}
H.~Bharadhwaj, A.~Kumar, N.~Rhinehart, S.~Levine, F.~Shkurti, and A.~Garg,
  ``Conservative safety critics for exploration,'' in \emph{International
  Conference on Learning Representations (ICLR)}, 2021.

\bibitem{quanser}
Quanser, ``{Quanser} courseware and resources,''
  \url{https://www.quanser.com/solution/control-systems/}, 2018.

\bibitem{lillicrap2015continuous}
T.~P. Lillicrap, J.~J. Hunt, A.~Pritzel, N.~Heess, T.~Erez, Y.~Tassa,
  D.~Silver, and D.~Wierstra, ``Continuous control with deep reinforcement
  learning,'' \emph{arXiv preprint arXiv:1509.02971}, 2015.

\bibitem{haarnoja2018soft}
T.~Haarnoja, A.~Zhou, P.~Abbeel, and S.~Levine, ``Soft actor-critic: Off-policy
  maximum entropy deep reinforcement learning with a stochastic actor,'' in
  \emph{International Conference on Machine Learning (ICML)}, 2018.

\bibitem{lutter2018deep}
M.~Lutter, C.~Ritter, and J.~Peters, ``Deep lagrangian networks: Using physics
  as model prior for deep learning,'' in \emph{International Conference on
  Learning Representations}, 2019.

\bibitem{Lutter2019Energy}
M.~Lutter and J.~Peters, ``Deep lagrangian networks for end-to-end learning of
  energy-based control for under-actuated systems,'' in \emph{International
  Conference on Intelligent Robots and Systems (IROS)}, 2019.

\bibitem{zhong2019symplectic}
Y.~D. Zhong, B.~Dey, and A.~Chakraborty, ``Symplectic ode-net: Learning
  hamiltonian dynamics with control,'' \emph{arXiv preprint arXiv:1909.12077},
  2019.

\bibitem{theodorou2010reinforcement}
E.~Theodorou, J.~Buchli, and S.~Schaal, ``Reinforcement learning of motor
  skills in high dimensions: A path integral approach,'' in \emph{International
  Conference on Robotics and Automation (ICRA)}.\hskip 1em plus 0.5em minus
  0.4em\relax IEEE, 2010.

\bibitem{pan2014model}
Y.~Pan, E.~A. Theodorou, and M.~Kontitsis, ``Model-based path integral
  stochastic control: A bayesian nonparametric approach,'' \emph{arXiv preprint
  arXiv:1412.3038}, 2014.

\bibitem{rajagopal2016neural}
K.~Rajagopal, S.~N. Balakrishnan, and J.~R. Busemeyer, ``Neural network-based
  solutions for stochastic optimal control using path integrals,'' \emph{IEEE
  transactions on neural networks and learning systems}, 2016.

\bibitem{pereira2019learning}
M.~Pereira, Z.~Wang, I.~Exarchos, and E.~Theodorou, ``Learning deep stochastic
  optimal control policies using forward-backward sdes,'' in \emph{Robotics:
  science and systems}, 2019.

\bibitem{pereira2020safe}
M.~A. Pereira, Z.~Wang, I.~Exarchos, and E.~A. Theodorou, ``Safe optimal
  control using stochastic barrier functions and deep forward-backward sdes,''
  \emph{arXiv preprint arXiv:2009.01196}, 2020.

\bibitem{hennig2011optimal}
P.~Hennig, ``Optimal reinforcement learning for gaussian systems,'' in
  \emph{Advances in Neural Information Processing Systems}, 2011.

\bibitem{kalashnikov2018qt}
D.~Kalashnikov, A.~Irpan, P.~Pastor, J.~Ibarz, A.~Herzog, E.~Jang, D.~Quillen,
  E.~Holly, M.~Kalakrishnan, V.~Vanhoucke \emph{et~al.}, ``Qt-opt: Scalable
  deep reinforcement learning for vision-based robotic manipulation,''
  \emph{arXiv preprint arXiv:1806.10293}, 2018.

\bibitem{borkar2001sensitivity}
V.~S. Borkar, ``A sensitivity formula for risk-sensitive cost and the
  actor--critic algorithm,'' \emph{Systems \& Control Letters}, 2001.

\bibitem{chow2015risk}
Y.~Chow, A.~Tamar, S.~Mannor, and M.~Pavone, ``Risk-sensitive and robust
  decision-making: a cvar optimization approach,'' \emph{arXiv preprint
  arXiv:1506.02188}, 2015.

\bibitem{tamar2013scaling}
A.~Tamar, H.~Xu, and S.~Mannor, ``Scaling up robust mdps by reinforcement
  learning,'' \emph{arXiv preprint arXiv:1306.6189}, 2013.

\bibitem{harrison2017adapt}
J.~Harrison*, A.~Garg*, B.~Ivanovic, Y.~Zhu, S.~Savarese, L.~Fei-Fei, and
  M.~Pavone (*~equal contribution), ``{AdaPT: Zero-Shot Adaptive Policy
  Transfer for Stochastic Dynamical Systems},'' in \emph{International
  Symposium on Robotics Research (ISRR)}.\hskip 1em plus 0.5em minus
  0.4em\relax Springer STAR, 2017.

\bibitem{hsu2021safety}
K.-C. Hsu, V.~Rubies-Royo, C.~J. Tomlin, and J.~F. Fisac, ``Safety and liveness
  guarantees through reach-avoid reinforcement learning,'' \emph{Robotics:
  Science and Systems}, 2021.

\bibitem{andrychowicz2020learning}
O.~M. Andrychowicz, B.~Baker, M.~Chociej, R.~Jozefowicz, B.~McGrew,
  J.~Pachocki, A.~Petron, M.~Plappert, G.~Powell, A.~Ray \emph{et~al.},
  ``Learning dexterous in-hand manipulation,'' \emph{The International Journal
  of Robotics Research}, 2020.

\bibitem{xie2021dynamics}
Z.~Xie, X.~Da, M.~{van de Panne}, B.~Babich, and A.~Garg, ``{Dynamics
  Randomization Revisited: A Case Study for Quadrupedal Locomotion},'' in
  \emph{IEEE International Conference on Robotics and Automation (ICRA)}, 2021.

\bibitem{chebotar2019closing}
Y.~Chebotar, A.~Handa, V.~Makoviychuk, M.~Macklin, J.~Issac, N.~Ratliff, and
  D.~Fox, ``Closing the sim-to-real loop: Adapting simulation randomization
  with real world experience,'' 2019.

\bibitem{ramos2019bayessim}
F.~Ramos, R.~C. Possas, and D.~Fox, ``Bayessim: adaptive domain randomization
  via probabilistic inference for robotics simulators,'' 2019.

\bibitem{simurlacra}
F.~Muratore, ``Simurlacra - a framework for reinforcement learning from
  randomized simulations,'' \url{https://github.com/famura/SimuRLacra}, 2020.

\bibitem{deramo2020mushroomrl}
C.~D'Eramo, D.~Tateo, A.~Bonarini, M.~Restelli, and J.~Peters, ``{MushroomRL}:
  Simplifying reinforcement learning research,''
  \url{https://github.com/MushroomRL/mushroom-rl}, 2020.

\bibitem{numpy}
C.~R. Harris, K.~J. Millman, S.~J. van~der Walt, R.~Gommers, P.~Virtanen,
  D.~Cournapeau, E.~Wieser, J.~Taylor, S.~Berg, N.~J. Smith, R.~Kern, M.~Picus,
  S.~Hoyer, M.~H. van Kerkwijk, M.~Brett, A.~Haldane, J.~F. del R{\'{i}}o,
  M.~Wiebe, P.~Peterson, P.~G{\'{e}}rard-Marchant, K.~Sheppard, T.~Reddy,
  W.~Weckesser, H.~Abbasi, C.~Gohlke, and T.~E. Oliphant, ``Array programming
  with {NumPy},'' \emph{Nature}, 2020.

\bibitem{pytorch}
A.~Paszke, S.~Gross, F.~Massa, A.~Lerer, J.~Bradbury, G.~Chanan, T.~Killeen,
  Z.~Lin, N.~Gimelshein, L.~Antiga, A.~Desmaison, A.~Kopf, E.~Yang, Z.~DeVito,
  M.~Raison, A.~Tejani, S.~Chilamkurthy, B.~Steiner, L.~Fang, J.~Bai, and
  S.~Chintala, ``{PyTorch}: An imperative style, high-performance deep learning
  library,'' in \emph{Advances in Neural Information Processing Systems 32},
  2019.

\end{thebibliography}
%



%

\begin{IEEEbiography}[{\includegraphics[width=1in,height=1.25in,clip,keepaspectratio]{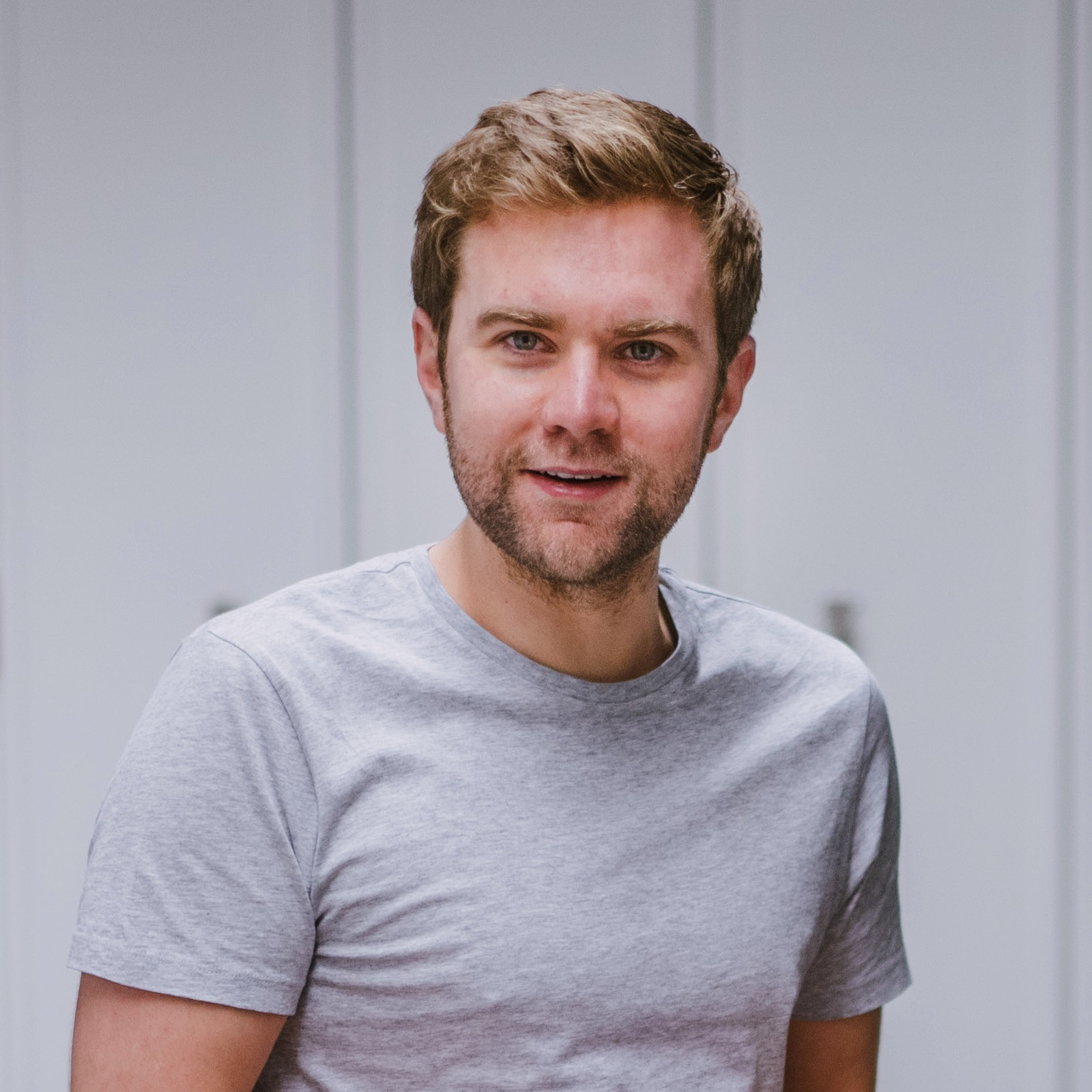}}]{Michael Lutter}
Michael Lutter is a Ph.D. student with Jan Peters at the Institute for Intelligent Autonomous Systems (IAS) TU Darmstadt since July 2017. During his Ph.D., Michael focuses on inductive biases for robot learning. During his Ph.D. completed a research internship at DeepMind, NVIDIA Research and received multiple awards for his research including the AI newcomer award (2019) of the German computer science foundation and was NVIDIA graduate fellowship finalist (2019). Prior to this, Michael received his M.Sc. in Electrical Engineering from the Technical University of Munich (TUM) amd worked for the Human Brain Project on bio-inspired learning for robotics.
\end{IEEEbiography}

\begin{IEEEbiography}[{\includegraphics[width=1in,height=1.25in,clip,keepaspectratio]{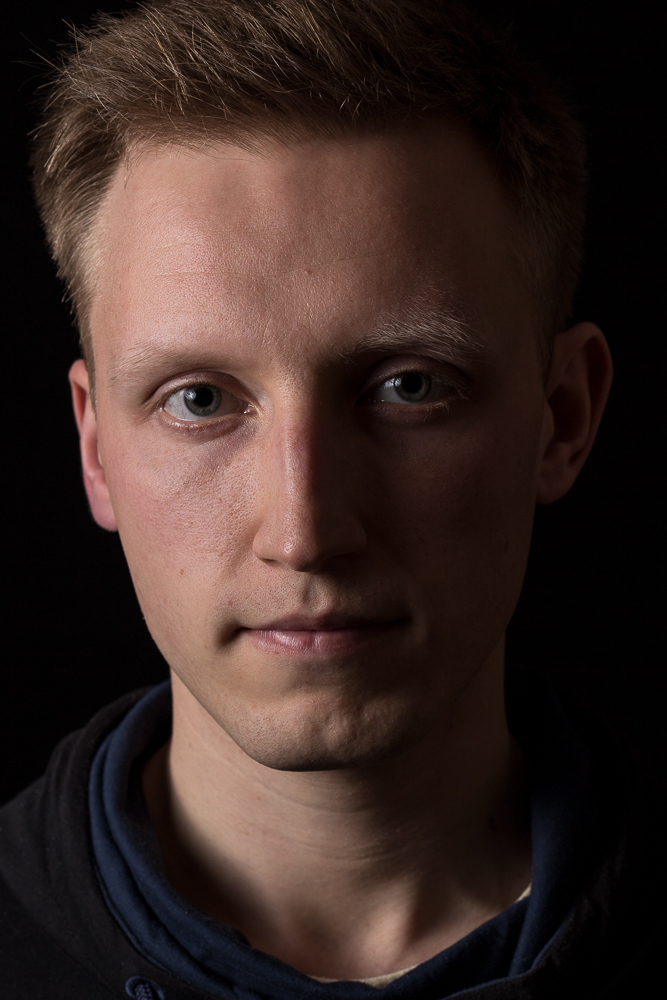}}]{Boris Belousov}
Boris Belousov received his M.Sc. degree in Electrical Engineering from the Friedrich–Alexander University Erlangen–Nürnberg in 2016. Currently, he is pursuing his Ph.D. at the Intelligent Autonomous Systems Group at the Computer Science Department of the Technical University of Darmstadt. His research interests center around Bayesian decision theory for optimal control under uncertainty in robotics.
\end{IEEEbiography}

\begin{IEEEbiography}[{\includegraphics[width=1in,height=1.25in,clip,keepaspectratio]{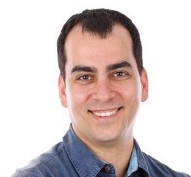}}]{Shie Mannor}
Shie Mannor (S'00-M'03-SM-09') received his Ph.D. degree in electrical engineering from the Technion-Israel Institute of Technology, Haifa, Israel, in 2002. From 2002 to 2004, he was a Fulbright scholar and a postdoctoral associate at M.I.T. He was with the Department of Electrical and Computer Engineering at McGill University from 2004 to 2010 where he was the Canada Research chair in Machine Learning. He has been with the Faculty of Electrical Engineering at the Technion since 2008 where he is currently a professor and co-director of the Machine Learning and Intelligent Systems center. He is also Distinguished Scientist at NVIDIA. Shie was the program chair of COLT in 2013. His research interests include machine learning and pattern recognition, planning and control focusing on reinforcement learning, multi-agent systems, and communications. 
\end{IEEEbiography}

\begin{IEEEbiography}[{\includegraphics[width=1in,height=1.25in,clip,keepaspectratio]{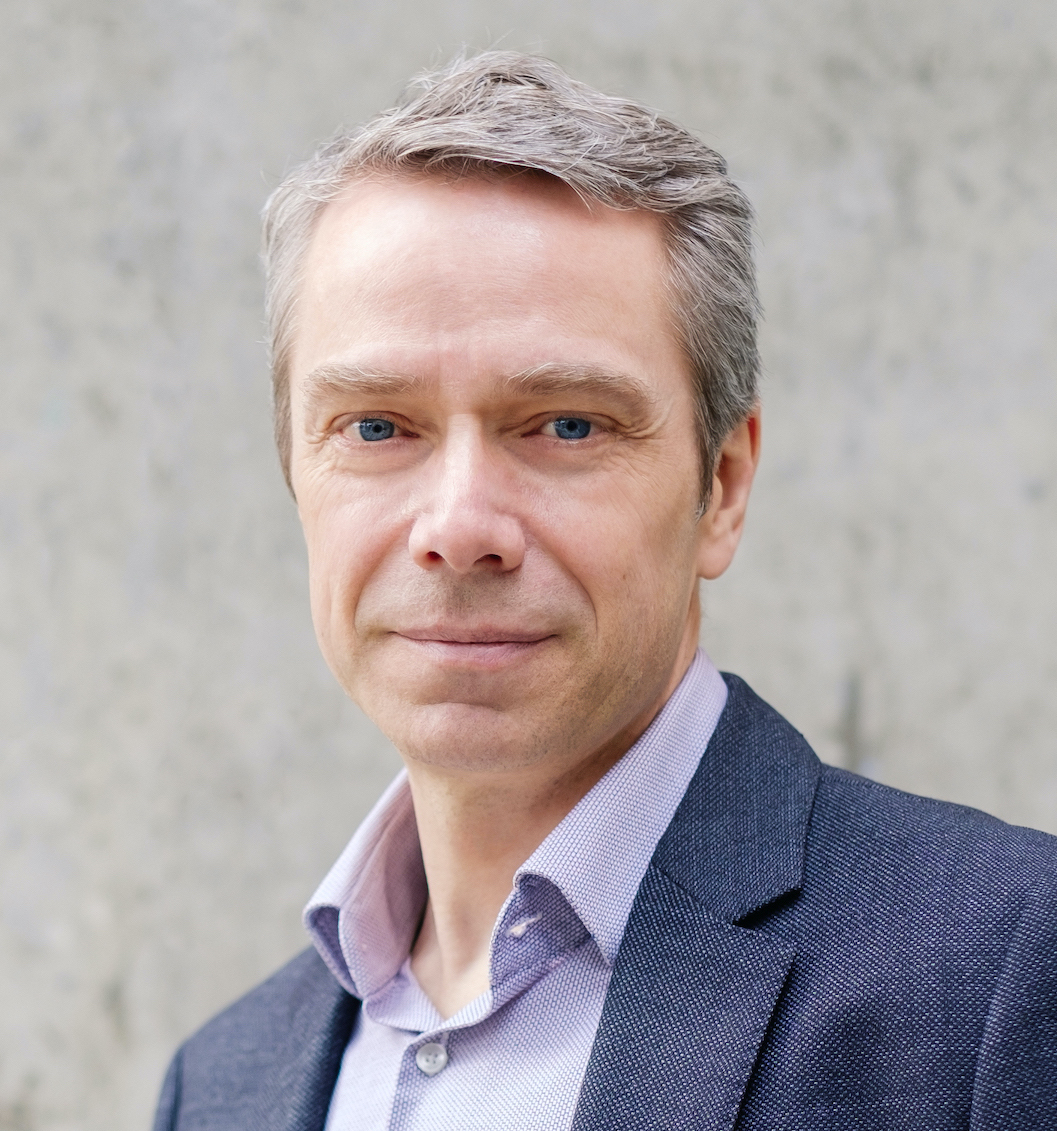}}]{Dieter Fox}
Dieter Fox received the PhD degree from the University of Bonn,
Germany. He is a professor in the Allen School of Computer Science \&
Engineering at the University of Washington, where he heads the UW
Robotics and State Estimation Lab.  He is also Senior Director of
Robotics Research at NVIDIA.  His research is in robotics and
artificial intelligence, with a focus on state estimation and
perception applied to problems such as mapping, object detection and
tracking, manipulation, and activity recognition. He has published
more than 200 technical papers and is co-author of the textbook
"Probabilistic Robotics". He is a Fellow of the IEEE , AAAI, and ACM, and
recipient of the IEEE RAS Pioneer Award.  He was an editor of the
IEEE Transactions on Robotics, program co-chair of the 2008 AAAI
Conference on Artificial Intelligence, and program chair of the 2013
Robotics: Science and Systems conference.
\end{IEEEbiography}

\begin{IEEEbiography}[{\includegraphics[width=1in,height=1.25in,clip,keepaspectratio]{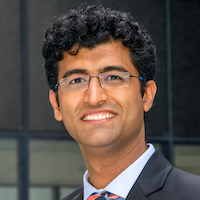}}]{Animesh Garg}
Animesh Garg is a CIFAR Chair Assistant Professor of Computer Science at University of Toronto, a Faculty Member at the Vector Institute, and a Senior Research Scientist at Nvidia. He earned a PhD from UC, Berkeley and was a postdoc at Stanford AI lab. He works on the Algorithmic Foundations for Generalizable Autonomy, to enable AI-based robots to work alongside humans.
\end{IEEEbiography}

\begin{IEEEbiography}[{\includegraphics[width=1in,height=1.25in,clip,keepaspectratio]{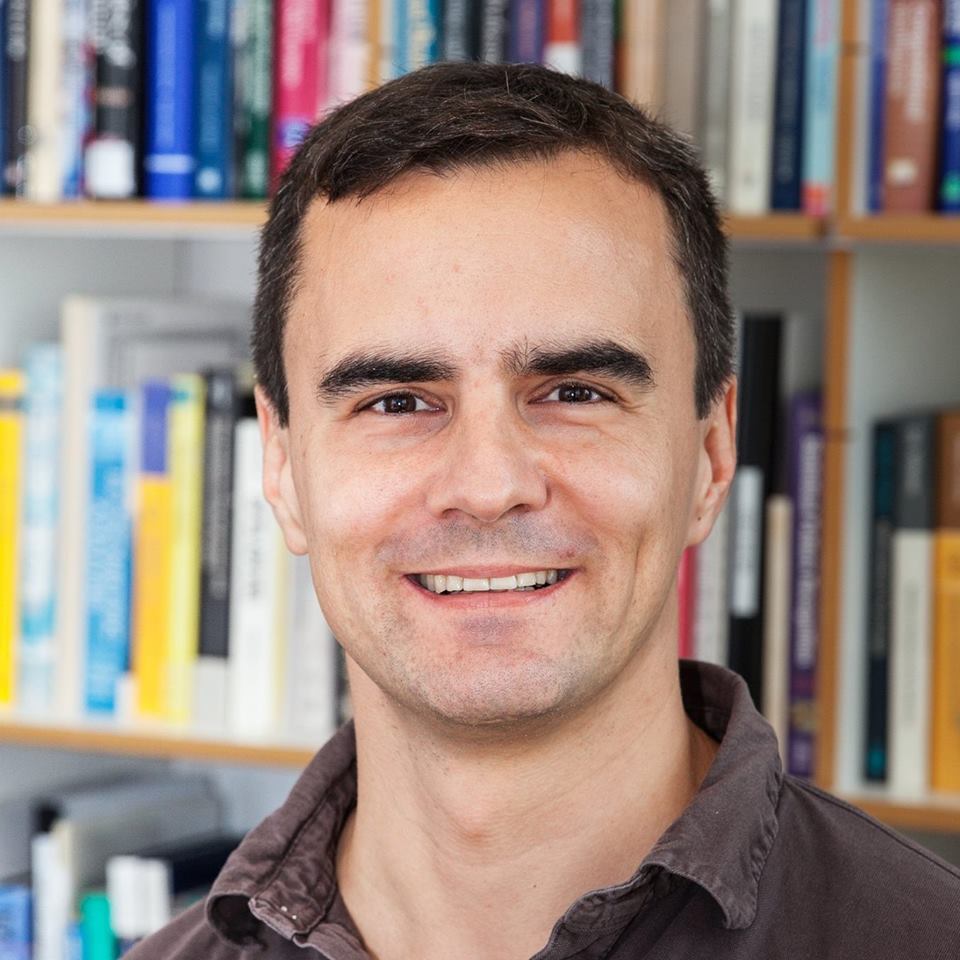}}]{Jan Peters}
s is a full professor (W3) in intelligent
autonomous systems, Computer Science Depart-
ment, Technical University of Darmstadt. He has received
the Dick Volz Best 2007 US PhD Thesis Runner-
Up Award, Robotics: Science \& Systems - Early
Career Spotlight, INNS Young Investigator Award,
and IEEE Robotics \& Automation Society’s Early
Career Award as well as numerous best paper awards. In 2015, he
received an ERC Starting Grant and in 2019, he was appointed as an
IEEE fellow.
\end{IEEEbiography}


\vfill


\newpage
\appendices
\section{Optimal Policy Proof - Theorem \ref{theorem:opt_policy}}

\begin{theorem*}
If the dynamics are control affine (\Eqref{eq:affine_dyn}), the reward is separable w.r.t. to state and action (\Eqref{eq:seperable_rwd}) and the action cost $g_c$ is positive definite and strictly convex, the continuous time optimal policy $\pi^{k}$ w.r.t. $V^{k}$ is described by
\begin{gather*}
    \pi^{*}(\vx)  = \nabla \tilde{g}_c \left( \mB(\vx)^{\top} \nabla_xV^{*} \right) \label{eq:optimal_policy}
\end{gather*}
with the convex conjugate $\tilde{g}$ of $g$ and the Jacobian of $V$ w.r.t. the system state $\nabla_x V$.
\end{theorem*}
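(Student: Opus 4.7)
The plan is to reduce the HJB maximization to an unconstrained strictly concave problem in $\vu$ alone and then invert the first-order condition using Legendre--Fenchel duality. First I would substitute the two structural assumptions directly into the HJB equation~\eqref{eq:hjb}. Because $f_c(\vx,\vu)=\va(\vx)+\mB(\vx)\vu$ is affine in $\vu$ and the reward splits as $r_c(\vx,\vu)=q_c(\vx)-g_c(\vu)$, the state-dependent terms $q_c(\vx)$ and $\va(\vx)^\top\nabla_x V^*$ factor out of the $\max_\vu$, leaving
\begin{align*}
\rho V^*(\vx) = q_c(\vx) + \va(\vx)^\top\nabla_x V^* + \max_{\vu}\Bigl[\,(\mB(\vx)^\top\nabla_x V^*)^\top\vu - g_c(\vu)\,\Bigr].
\end{align*}
The maximization is now a pointwise optimization over $\vu\in\mathbb{R}^{n_u}$ of a linear term minus a strictly convex term, which is exactly the defining expression for the convex conjugate $\tilde g_c(\vw)=\sup_\vu[\vw^\top\vu-g_c(\vu)]$ evaluated at $\vw=\mB(\vx)^\top\nabla_x V^*$.

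Next I would apply the first-order optimality condition. Because $g_c$ is strictly convex and (implicitly) differentiable, the concave objective inside the brackets has a unique maximizer characterized by stationarity: $\nabla g_c(\vu^*)=\mB(\vx)^\top\nabla_x V^*$. Since strict convexity of $g_c$ makes $\nabla g_c$ injective, and positive definiteness plus the coercivity that follows ensures the range covers the arguments we need, $\nabla g_c$ is invertible on its image. Here I would cite the standard Legendre--Fenchel identity $(\nabla g_c)^{-1}=\nabla\tilde g_c$, which holds precisely under strict convexity and differentiability of $g_c$. Applying this yields $\vu^*=\nabla\tilde g_c(\mB(\vx)^\top\nabla_x V^*)$, which is the claimed closed form.

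The main obstacle I anticipate is the technical step of justifying the inversion of $\nabla g_c$ rigorously, i.e.\ showing that $\mB(\vx)^\top\nabla_x V^*$ always lies in the range of $\nabla g_c$ so that the maximizer exists. For the standard quadratic, logistic, and $\log\cos$ costs in Table~\ref{table:convex_conjugate_functions} this is immediate because $g_c$ is smooth, strictly convex, and essentially smooth (so $\nabla g_c$ surjects onto the relevant dual space), but for the barrier-type costs with bounded domain one needs the additional observation that the supremum is still attained in the interior because the linear term is finite. In the write-up I would therefore add a brief remark that positive definiteness together with strict convexity and the domain assumptions guarantee a unique interior maximizer so that the stationarity equation is both necessary and sufficient. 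The remaining steps---substituting back to obtain the simplified HJB $\rho V^*=q_c+\va^\top\nabla_x V^*+\tilde g_c(\mB^\top\nabla_x V^*)$---are routine and would only be included to connect this result to Section~\ref{sec:value_iteration}.
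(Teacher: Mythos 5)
Your proposal follows essentially the same route as the paper's proof: substitute the control-affine dynamics and separable reward into the HJB, isolate the pointwise maximization $\max_{\vu}[\vw^{\top}\vu - g_c(\vu)]$ with $\vw = \mB(\vx)^{\top}\nabla_x V^{*}$, and invert the stationarity condition $\nabla g_c(\vu^{*}) = \vw$ via the Legendre--Fenchel identity $(\nabla g_c)^{-1} = \nabla \tilde{g}_c$. Your added remark on attainment of the supremum and the range of $\nabla g_c$ for barrier-type costs is a point the paper glosses over, but it does not change the argument.
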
 

\begin{proof}
This proof follows the derivation our prior work \cite{lutter2019hjb} that generalized the special case described by \cite{doya2000reinforcement}. Starting from the \acf{hjb} equations and substituting the control affine dynamics yields 
\begin{align*}
\rho \: V^{*}(\vx) &= \max_{\vu} \:\: r(\vx, \vu) \:\: + f_c(\vx, \vu)^{\top} \: \nabla_xV^{*}, \\
&= \max_{\vu} \:\: q_c(\vx) - g_c( \vu) \:\: + \left[\va(\vx) + \mB(\vx) \vu \right] \: \nabla_x V^{*} \\
&= \max_{\vu} \:\: \left[\vu^{\top} \mB^{\top} \nabla_x V^{*} - g_c( \vu) \right] + q_c(\vx)  + \va^{\top} \: \nabla_x V^{*}.
\end{align*}
Therefore, the continuous time optimal action is defined as 
\begin{align*} 
    \vu^{*} = \argmax_{\vu} \:  \vu^{\top} \mB^{\top} \nabla_{x}V^{*} - g_{c}(\vu).
\end{align*}
This optimization can be solved analytically as $g_c$ is strictly convex and hence $\nabla g_{c}(\vu) = \vw$ is invertible, i.e., $\vu = \left[ \nabla g_{c}\right]^{-1}(\vw) \coloneqq \nabla \tilde{g}_{c}(\vw)$ with the convex conjugate $\tilde{g}$. The optimal action is described by
\begin{align*}
&\mB(\vx)^{\top} \nabla_{x}V^{*} - \nabla g_c(\vu) \coloneqq 0 &
&
\Rightarrow & \vu^{*} = \nabla \tilde{g}_c \left( \mB(\vx)^{\top} \nabla_{x}V^{*} \right).
\end{align*}
Therefore, the optimal action follows the gradient of the Value function upwards by projecting the gradient on using the control matrix and rescaling the magnitude using the action cost. 
\end{proof}

\section{Optimal Adversary Proof - Theorem 2}
\subsection{State Disturbance Proof}
\begin{theorem*}
For the adversarial state disturbance (\Eqref{eq:dyn_adv_state}) with bounded in signal energy (\Eqref{eq:signal_energy}), the optimal continuous time policy $\pi^{k}$ and state disturbance $\vxi^{k}_{x}$ is described by
\begin{align*}
\pi^{*}(\vx) &= \nabla \tilde{g} \left(\mB(\vx)^{\top} \nabla_x V^{*}\right) & \vxi_{x}^{*} &= - \alpha \frac{\nabla_x V^{*}}{\| \nabla_x V^{*} \|_2}.
\end{align*}
\end{theorem*}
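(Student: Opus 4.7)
The plan is to follow the structure of the proof sketch already outlined in the text, fleshing out each step. First, I would substitute the perturbed control-affine dynamics~\eqref{eq:dyn_adv_state} and the separable reward~\eqref{eq:seperable_rwd} into the HJI equation~\eqref{eq:hji}, and rewrite the admissible set $\Omega_{E}$ as the explicit inequality constraint $\tfrac{1}{2}(\vxi_x^{\top}\vxi_x - \alpha^{2}) \leq 0$. The resulting max-min objective is
\begin{align*}
\rho V^{*}(\vx) = \max_{\vu} \min_{\vxi_x} \Bigl[\,&q_c(\vx) - g_c(\vu) \\
&+ \bigl(\va(\vx) + \mB(\vx)\vu + \vxi_x\bigr)^{\top} \nabla_x V^{*}\,\Bigr].
\end{align*}
The key structural observation is that $\vu$ and $\vxi_x$ appear in additively separated terms, which decouples the inner minimization from the outer maximization. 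This decoupling is what makes the state-disturbance case the simplest of the four: neither player needs to react to the other.

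Next, I would handle the action maximization exactly as in Theorem~\ref{theorem:opt_policy}: dropping terms independent of $\vu$ reduces the problem to $\argmax_{\vu} (\mB^{\top}\nabla_x V^{*})^{\top}\vu - g_c(\vu)$, whose solution is $\vu^{*} = \nabla \tilde{g}_c(\mB(\vx)^{\top}\nabla_x V^{*})$ by convex conjugacy, since $g_c$ is strictly convex and positive definite. This step is a direct invocation of a result already established in the excerpt, so it requires no new work.

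For the adversarial step, I would write the Lagrangian
\[
\mathcal{L}(\vxi_x, \lambda) = (\nabla_x V^{*})^{\top}\vxi_x + \tfrac{1}{2}\lambda\bigl(\vxi_x^{\top}\vxi_x - \alpha^{2}\bigr),
\]
with dual variable $\lambda \geq 0$. The KKT stationarity condition $\nabla_x V^{*} + \lambda\,\vxi_x = 0$ yields $\vxi_x = -\lambda^{-1}\nabla_x V^{*}$, provided $\lambda > 0$. To fix $\lambda$, I would argue that because the objective is linear in $\vxi_x$ and $\nabla_x V^{*} \neq 0$ generically, the minimum cannot occur at an interior point, so complementary slackness forces the constraint to be active: $\|\vxi_x^{*}\|_2 = \alpha$. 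Substituting the stationarity expression into this equality gives $\lambda = \|\nabla_x V^{*}\|_2/\alpha$, and hence
\[
\vxi_x^{*} = -\alpha\,\frac{\nabla_x V^{*}}{\|\nabla_x V^{*}\|_2},
\]
as claimed.

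The main obstacle is not technical but conceptual: one should verify that the KKT conditions are both necessary and sufficient here. Necessity follows because Slater's condition holds trivially (any $\vxi_x$ with $\|\vxi_x\|_2 < \alpha$ is strictly feasible), and sufficiency follows because the inner problem is a convex minimization (linear objective over a convex set). I would also note the degenerate case $\nabla_x V^{*} = 0$, at which any $\vxi_x$ in the admissible ball is optimal and the stated formula is defined by continuity (or convention). All remaining steps — verifying dual feasibility $\lambda \geq 0$, checking that the max and min can be interchanged via the Isaacs condition cited in the introduction — are routine.
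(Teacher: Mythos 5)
Your proposal is correct and follows essentially the same route as the paper's own proof: substitute the perturbed dynamics into the HJI with the admissible set as an explicit quadratic constraint, exploit the additive decoupling of $\vu$ and $\vxi_x$, recover the action via convex conjugacy, and solve the disturbance subproblem with the KKT conditions to find the active-constraint solution. Your additional remarks on Slater's condition, sufficiency, and the degenerate case $\nabla_x V^{*} = 0$ are sound refinements but do not change the argument.
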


\noindent
\emph{Proof.} \Eqref{eq:hji} can be formulated with the explicit constraint instead of the infimum. The constrained \acf{hji} equation with an bounded signal energy constraint is described by 
\begin{gather*}
\rho \: V^{*}(\vx) = \max_{\vu} \min_{\vxi_x} \:\: r(\vx, \vu) \:\: + f_c(\vx, \vu, \vxi_{x})^{\top} \: \nabla_{\vx} V^{*}, \\
\text{with} \hspace{5pt} \vxi_{x}^{\top} \vxi_{x} - \alpha^{2} \leq 0. 
\end{gather*}
Substituting the dynamics (\Eqref{eq:dyn_adv_state}) model and the reward yields
\begin{align*}
\rho V^{*} &= \max_{\vu} \min_{\vxi_x} \:\: q_c(\vx) - g_c( \vu) \:\: + \left[\va + \mB \vu + \vxi^{*} \right] \: \nabla_x V^{*}, \\
&= \max_{\vu}\left[\vu^{\top} \mB^{\top} \nabla_x V^{*} - g_c(\vu) \right] + \min_{\vxi_x} \left[\vxi_x^{\top} \nabla_x V^{*} \right] \\ 
&\hspace{65pt} + \left[ q_c + \va^{\top}  \nabla_x V^{*} \right].
\end{align*}
Therefore, the optimal action identical to the optimal policy of the \ac{hjb} and is described by 
\begin{gather*}
\vu_{t} = \argmax_{\vu} \:  \nabla_x V^{\top} \mB(\vx_t) \: \vu - g_{c}(\vu) \\
\Rightarrow \hspace{5pt}
\vu_{t} = \nabla \tilde{g}_c\left(\mB(\vx_t) \nabla_x V\right).
\end{gather*}
The optimal state disturbance is described by
\begin{align*}
\vxi^{*}_x = \argmin_{\vxi_x} \: \nabla_x V^{\top} \vxi_x 
\hspace{10pt} \text{with} \hspace{10pt} 
\vxi_{x}^{\top} \vxi_{x} - \alpha^{2} \leq 0.
\end{align*}
This constrained optimization can be solved using the Karush-Kuhn-Tucker (KKT) conditions, i.e., 
\begin{align*}
\nabla_x V + 2 \lambda \: \vxi_x = 0 \hspace{10pt} \Rightarrow \hspace{10pt} \vxi^{*}_{x} = -\frac{1}{2 \lambda} \nabla_x V
\end{align*}
with the Lagrangian multiplier $\lambda \geq 0$. From primal feasibility and the complementary slackness condition of the KKT conditions follows that
\begin{align*}
\frac{1}{4\lambda^{2}} \nabla_x V^{\top} \nabla_x V - \alpha^{2} &\leq 0 \hspace{0pt}  \\
& \hspace{45pt} \Rightarrow \hspace{2pt} \lambda \geq \frac{1}{2 \alpha} \sqrt{\nabla_x V^{\top} \nabla_x V}\\
\lambda \left(\frac{1}{4 \lambda^{2}} \nabla_x V^{\top} \nabla_x V - \alpha^{2}\right) &= 0 \hspace{0pt}\\ 
&\hspace{5pt} \Rightarrow \hspace{2pt} \lambda_0 = 0, \hspace{5pt} \lambda_1 = \frac{1}{2 \alpha} \sqrt{\nabla_x V^{\top} \nabla_x V}.
\end{align*}
Therefore, the optimal adversarial state perturbation with bounded signal energy is described by
\begin{align*}
\vxi^{*} = - \alpha \frac{V^{*}_{x}}{\| V^{*}_{x} \|_2}.
\end{align*}
This solution is intuitive as the adversary wants to minimize the reward and this disturbance performs steepest descent using the largest step-size. Therefore, the perturbation is always on the constraint. \hspace{\fill}\mbox{\qed}

\subsection{Action Disturbance Proof}
\begin{theorem*}
For the adversarial action disturbance (\Eqref{eq:dyn_adv_state}) with bounded in signal energy (\Eqref{eq:signal_energy}), the optimal continuous time policy $\pi^{k}$ and action disturbance $\vxi^{k}_{u}$ is described by
\begin{align*}
\pi^{*}(\vx) &= \nabla \tilde{g} \left(\mB(\vx)^{\top} \nabla_x V^{*}\right) & \vxi^{*}_{u} &= - \alpha \frac{\mB(\vx)^{\top} \nabla_x V^{*}}{\| \mB(\vx)^{\top} \nabla_x V^{*} \|_2}.
\end{align*}
\end{theorem*}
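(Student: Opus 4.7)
The plan is to mirror the strategy used for Theorem 2.1 almost line for line, exploiting the fact that in the action-perturbation dynamics (\Eqref{eq:dyn_adv_action}) both $\vu$ and $\vxi_u$ enter the state equation through the same control matrix $\mB(\vx)$. First I would rewrite the HJI with the admissible set replaced by an explicit quadratic constraint $\vxi_u^{\top} \vxi_u - \alpha^2 \leq 0$, and then substitute the separable reward (\Eqref{eq:seperable_rwd}) together with the action-perturbed dynamics. The integrand on the right-hand side of the HJI will decompose as
\[
\underbrace{\vu^{\top} \mB^{\top} \nabla_x V^{*} - g_c(\vu)}_{\text{depends on } \vu \text{ only}} \;+\; \underbrace{\vxi_u^{\top} \mB^{\top} \nabla_x V^{*}}_{\text{depends on } \vxi_u \text{ only}} \;+\; \underbrace{q_c(\vx) + \va(\vx)^{\top} \nabla_x V^{*}}_{\text{depends on neither}} .
\]
Because the $\vu$- and $\vxi_u$-dependent terms are additively separable, the $\max$-$\min$ decouples into two independent optimizations, and the optimal action problem is identical to the unconstrained problem already solved in Theorem~\ref{theorem:opt_policy}. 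Hence $\pi^{*}(\vx) = \nabla \tilde{g}_c\bigl(\mB(\vx)^{\top} \nabla_x V^{*}\bigr)$ follows immediately.

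For the adversary I would apply the Karush--Kuhn--Tucker conditions to the constrained linear program
\[
\vxi_u^{*} = \argmin_{\vxi_u} \; \bigl(\mB(\vx)^{\top} \nabla_x V^{*}\bigr)^{\top} \vxi_u \quad \text{subject to} \quad \vxi_u^{\top} \vxi_u - \alpha^2 \leq 0.
\]
This is structurally identical to the state-disturbance case from Theorem~2.1, with the vector $\mB(\vx)^{\top} \nabla_x V^{*}$ playing the role that $\nabla_x V^{*}$ played there. Stationarity yields $\mB^{\top} \nabla_x V^{*} + 2\lambda \vxi_u = 0$, so $\vxi_u^{*} = -\tfrac{1}{2\lambda}\mB^{\top} \nabla_x V^{*}$. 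Primal feasibility together with complementary slackness then pin down the non-trivial multiplier $\lambda = (2\alpha)^{-1} \|\mB^{\top} \nabla_x V^{*}\|_2$, producing the claimed formula $\vxi_u^{*} = -\alpha\, \mB^{\top} \nabla_x V^{*} / \|\mB^{\top} \nabla_x V^{*}\|_2$.

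I do not expect any genuinely hard step in this argument: the proof reduces to the state-disturbance proof via a one-line observation about the column space of $\mB$. The only subtlety worth addressing carefully is the degenerate case $\mB(\vx)^{\top} \nabla_x V^{*} = \vzero$, where the trivial KKT multiplier $\lambda_0 = 0$ becomes active and any $\vxi_u$ inside the ball attains the minimum; one must check that the quoted closed form is consistent as a limit (equivalently, that the disturbance has no effect on $V^{*}$ in that direction, which is intuitive since such a perturbation is annihilated by $\mB$). Interpretation-wise, the result says the action adversary performs steepest descent on $V^{*}$ restricted to the control authority subspace, which is exactly the dual of the optimal policy performing steepest ascent through the same channel.
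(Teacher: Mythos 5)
Your proposal is correct and follows essentially the same route as the paper's own proof: explicit quadratic constraint, additive decoupling of the $\vu$- and $\vxi_u$-terms, and KKT conditions yielding the multiplier $\lambda = (2\alpha)^{-1}\|\mB^{\top}\nabla_x V^{*}\|_2$. Your remark on the degenerate case $\mB(\vx)^{\top}\nabla_x V^{*} = \vzero$ is a small point of care the paper omits, but it does not change the argument.
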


\noindent
\emph{Proof.} The \ac{hji} described in \Eqref{eq:hji} can be formulated with the explicit constraint, i.e., 
\begin{gather*}
\rho \: V^{*}(\vx) = \max_{\vu} \min_{\vxi_u} \:\: r(\vx, \vu) \:\: + f_c(\vx, \vu, \vxi_{u})^{\top} \: \nabla_{\vx} V^{*}, \\
\hspace{5pt} \text{with} \hspace{5pt} \vxi_{u}^{\top} \vxi_{u} \leq \alpha^{2}. 
\end{gather*}
Substituting the dynamics (\Eqref{eq:dyn_adv_state}) model and the reward yields
\begin{align*}
\rho V^{*} &= \max_{\vu} \min_{\vxi_x} \:\: q_c(\vx) - g_c( \vu) \:\: + \left[\va + \mB \left( \vu + \vxi \right) \right] \: \nabla_x V^{*}, \\
&= \max_{\vu}\left[\vu^{\top} \mB^{\top} \nabla_x V^{*} - g_c(\vu) \right] + \min_{\vxi_x} \left[\vxi_u^{\top} \mB^{\top} \nabla_x V^{*} \right] \\ 
&\hspace{65pt} + \left[ q_c + \va^{\top}  \nabla_x V^{*} \right].
%
\end{align*}
Therefore, the optimal action is described by 
\begin{gather*}
\vu = \argmax_{\vu} \:  \nabla_x V^{\top} \mB(\vx) \: \vu - g_{c}(\vu)  \\
\Rightarrow \hspace{2pt}
\vu^{*} = \nabla \tilde{g}_c\left(\mB(\vx) \nabla_x V^{*} \right).
\end{gather*}
The optimal state disturbance is described by
\begin{align*}
\vxi^{*}_u = \argmin_{\vxi_u} \: \nabla_x V^{\top} \mB(\vx) \: \vxi_u \hspace{10pt} \text{with} \hspace{10pt} \vxi_{u}^{\top} \vxi_{u} \leq \alpha^{2}.
\end{align*}
This constrained optimization can be solved using the Karush-Kuhn-Tucker (KKT) conditions, i.e., 
\begin{align*}
\mB(\vx_t)^{\top} \nabla_x V + 2 \lambda \: \vxi_u = 0 \hspace{10pt} \Rightarrow \hspace{10pt} \vxi^{*}_{u} = -\frac{1}{2 \lambda} \mB(\vx_t)^{\top} \nabla_x V
\end{align*}
with the Lagrangian multiplier $\lambda \geq 0$. From primal feasibility and the complementary slackness condition of the KKT conditions follows that
\begin{align*}
\frac{1}{4 \lambda^{2}} \nabla_x V^{\top} \mB \mB^{\top} \nabla_x V - \alpha^{2} &\leq 0 \\
&\hspace{5pt} \Rightarrow \hspace{2pt} \lambda \geq \frac{1}{2 \alpha} \sqrt{\nabla_x V^{\top} \mB \mB^{\top} \nabla_x V}\\
\lambda \left(\frac{1}{4 \lambda^{2}} \nabla_x V^{\top} \mB \mB^{\top} \nabla_x V - \alpha^{2}\right) &= 0 \\
&\hspace{-35pt} \Rightarrow \hspace{2pt} \lambda_0 = 0, \hspace{5pt} \lambda_1 = \frac{1}{2\alpha} \sqrt{\nabla_x V^{\top} \mB \mB^{\top} \nabla_x V}.
\end{align*}
Therefore, the optimal adversarial state perturbation is described by
\begin{align*}
\vxi^{*}_{u} = - \alpha \frac{\mB(\vx_t)^{\top} V^{*}_{x}}{\| \mB(\vx_t)^{\top} V^{*}_{x} \|_2}.
\end{align*}
This solution is similar to the state adversary but in this case the disturbance must be projected to the action space using the control matrix $\mB$. 
\hspace{\fill}\mbox{\qed}

\subsection{Model Disturbance Proof}
\begin{theorem*}
For the adversarial model disturbance (\Eqref{eq:dyn_adv_model}) with element-wise bounded amplitude (\Eqref{eq:amplitude}), smooth drift and control matrix (i.e., $\va, \mB \in C^{1}$) and $\mB(\theta + \vxi_{\theta}) \approx \mB(\theta)$, the optimal continuous time policy $\pi^{k}$ and model disturbance~$\vxi^{k}_{\theta}$ is described by
\begin{gather*}
\pi^{*}(\vx) = \nabla \tilde{g} \left(\mB(\vx)^{\top} \nabla_x V^{*} \right) \hspace{20pt}
\vxi^{*}_{\theta} = -\vDelta_{\nu} \sign \left( \vz_{\theta }\right) + \vmu_{\nu} \\
\text{with} \hspace{5pt} \vz_{\theta} = \left(\frac{\partial \mB}{\partial \theta} \vu^{*} + \frac{\partial \va}{\partial \theta} \right)^{\top} V^{*}_{x},
\end{gather*}
mean $\vmu_{\vnu} = \left( \vnu_{\text{max}} + \vnu_{\text{min}} \right) / 2$ and range $\vDelta_{\vnu} = \left( \vnu_{\text{max}} - \vnu_{\text{min}} \right) / 2$.
\end{theorem*}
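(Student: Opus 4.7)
My plan is to mirror the strategy used in the proof of Theorem 2.1, adapted to the element-wise amplitude constraint and the fact that now the adversary enters through the dynamics parameters rather than additively. First I would rewrite the HJI (\Eqref{eq:hji}) by replacing the infimum over $\Omega_{A}$ with the explicit element-wise box constraint $(\vxi_{\theta}-\vmu_{\nu})^{2}\leq \vDelta_{\nu}^{2}$, substitute the separable reward (\Eqref{eq:seperable_rwd}) and the model-perturbed dynamics (\Eqref{eq:dyn_adv_model}), and use the minimax exchange from \cite{isaacs1999differential} so that the problem becomes
\begin{align*}
\max_{\vu}\min_{\vxi_{\theta}}\ \bigl[\va(\vx;\vtheta+\vxi_{\theta})+\mB(\vx;\vtheta+\vxi_{\theta})\vu\bigr]^{\top}\nabla_{x}V^{*}-g_{c}(\vu)
\end{align*}
up to terms independent of $\vu$ and $\vxi_{\theta}$.

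Next I would attack the inner minimization over $\vxi_{\theta}$ via a Lagrangian with an element-wise multiplier vector $\vlambda\succeq 0$ for the box constraint. The stationarity condition reads
\begin{align*}
\Bigl(\tfrac{\partial \va}{\partial \vtheta}+\tfrac{\partial \mB}{\partial \vtheta}\vu\Bigr)^{\top}\nabla_{x}V^{*}+\vlambda\odot(\vxi_{\theta}-\vmu_{\nu})=\vzero,
\end{align*}
which yields $\vxi_{\theta}^{*}=-\vz_{\theta}\oslash \vlambda+\vmu_{\nu}$. Combining primal feasibility and complementary slackness elementwise forces $|\xi_{\theta,i}^{*}-\mu_{\nu,i}|=\Delta_{\nu,i}$ whenever $z_{\theta,i}\neq 0$, so the minimizer saturates each box and the sign of the deviation matches $-\sign(z_{\theta,i})$. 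This gives the claimed bang-bang form $\vxi_{\theta}^{*}(\vu)=-\vDelta_{\nu}\sign(\vz_{\theta}(\vu))+\vmu_{\nu}$, exactly as in the \Eqref{eq:amplitude} template.

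For the outer maximization, I would invoke the envelope theorem (cf. \cite{carter2001foundations}) so that when differentiating the value of the inner minimization with respect to $\vu$, the implicit dependence $\vxi_{\theta}^{*}(\vu)$ contributes nothing at the optimum. The first-order condition therefore reduces to
\begin{align*}
\mB(\vx;\vtheta+\vxi_{\theta}^{*}(\vu))^{\top}\nabla_{x}V^{*}-\nabla g_{c}(\vu)=\vzero.
\end{align*}
This is the step where the main obstacle arises: the equation is implicit in $\vu$ because $\vxi_{\theta}^{*}$ depends on $\vu$ through $\mB$ and $\va$, and $\mB$ is generally not invertible with respect to $\vtheta$. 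I would resolve this exactly as in the proof sketch in the main text, by invoking the assumption $\mB(\vx;\vtheta+\vxi_{\theta})\approx \mB(\vx;\vtheta)$, which decouples the policy from the adversary and yields the closed-form $\vu^{*}=\nabla \tilde{g}(\mB(\vx;\vtheta)^{\top}\nabla_{x}V^{*})$.

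Finally, I would justify this decoupling assumption by the same argument used for the preceding cases: with $\mB$ only weakly dependent on $\vtheta$ within the admissible box, the approximation incurs a controlled error, and the simultaneous-move interpretation from \cite{bansal2017hamilton} is consistent with the stationary Markovian adversary assumed throughout Section~\ref{sec:problem}. I expect the KKT analysis of the box constraint to be the technically delicate piece, since one must verify that the minimizer saturates componentwise and handle the non-smoothness of $\sign(\cdot)$ at $z_{\theta,i}=0$ (where any value in $[-\Delta_{\nu,i},\Delta_{\nu,i}]+\mu_{\nu,i}$ is optimal), but this is a measure-zero degeneracy that does not affect the value of $V^{*}$.
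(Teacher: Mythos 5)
Your proposal is correct and follows essentially the same route as the paper's proof: explicit box constraint in place of the infimum, element-wise KKT analysis of the inner minimization yielding the saturated bang-bang disturbance, then the envelope theorem plus the approximation $\mB(\vx;\vtheta+\vxi_{\theta})\approx\mB(\vx;\vtheta)$ to decouple the outer maximization and recover $\vu^{*}=\nabla\tilde{g}(\mB^{\top}\nabla_{x}V^{*})$. Your explicit treatment of the degenerate components with $z_{\theta,i}=0$ and the cleaner element-wise stationarity condition are minor refinements over the paper's version, not a different argument.
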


\medskip
\noindent
\emph{Proof.} The \ac{hji} described in \Eqref{eq:hji} can be written with the explicit constraint instead of the infimum with the admissible set $\Omega_{A}$. With the bounded amplitude constraint, this constrained optimization is described by
\begin{gather*}
\rho \: V^{*}(\vx) = \max_{\vu} \min_{\vxi_x} \:\: r(\vx, \vu) \:\: + f_c(\vx, \vu, \vxi_{\theta})^{\top} \: \nabla_{\vx} V^{*}, \\
\text{with} \hspace{5pt} \frac{1}{2}\left((\vxi_{\theta} - \vmu_{\nu})^{2} - \vDelta_{\nu}^{2}\right) \leq \mathbf{0}.
\end{gather*}
Substituting the dynamics (\Eqref{eq:dyn_adv_model}), the reward function as well as abbreviating $\mB(\vx; \theta + \vxi_{\theta}) = \mB_{\xi}$ and $\va(\vx; \theta + \vxi_{\theta}) = \va_{\xi}$ yields
\begin{align*}
\rho V^{*} &= \max_{\vu} \min_{\vxi_{\xi}} \:\: q_c(\vx) - g_c( \vu) \:\: + \left[\va_{\xi} + \mB_{\xi} \vu \right]^{\top} \: \nabla_x V^{*}, \\
&= \max_{\vu} \min_{\vxi_x} \left[\left[\va_{\xi} + \mB_{\xi} \vu \right]^{\top} \: \nabla_x V^{*} - g_c(\vu) \right]  
+  q_c.
\end{align*}
Therefore, the action and disturbance is determined by 
\begin{align*}
\vu^{*}, \vxi^{*}_{\theta} = \argmax_{\vu} \argmin_{\vxi} \left[\nabla_x V^{\top} \left(\va_{\xi} + \mB_{\xi} \vu \right) - g_c(\vu) \right].
\end{align*}
This nested max-min optimization can be solved by first solving the inner optimization w.r.t. to $\vu$ and substituting this solution into the outer maximization. The Lagrangian for the optimal model disturbance is described by
\begin{align*}
\vxi^{*} = \argmin_{\vxi} \: \nabla_x V^{\top} \left(\va_{\xi} + \mB_{\xi} \vu \right) + \frac{1}{2} \vlambda^{\top} \left((\vxi_{\theta} - \vmu_{\nu})^{2} - \vDelta_{\nu}^{2}\right)
\end{align*}
Using the KKT conditions this optimization can be solved. The stationarity condition yields
\begin{gather*}
\vz_{\theta} + \vlambda^{\top} (\vxi_{\theta} - \vmu_{\nu}) = 0  \hspace{15pt} \Rightarrow \hspace{15pt} \vxi^{*}_{\theta} = - \vz_{\theta} \oslash \vlambda + \vmu_{\nu} \\
\text{with} \hspace{5pt} \vz_{\theta} =  \left[ \frac{\partial \va}{\partial \theta} + \frac{\partial \mB}{\partial \theta} \vu \right]^{\top} \nabla_x V
\end{gather*}
%
and the elementwise division $\oslash$. The primal feasibility and the complementary slackness yields
\begin{align*}
\frac{1}{2} \left(-\vz_{\theta}^{2} \oslash \vlambda^{2} - \vDelta_{\nu}^{2} \right) \leq 0 \hspace{5pt} &\Rightarrow \hspace{5pt}  
\vlambda  \geq \| \vz_{\theta} \|_{1} \oslash \vDelta_{\nu} \\
\frac{1}{2} \vlambda^{\top} \left(-\vz_{\theta}^{2} \oslash \vlambda^{2} - \vDelta_{\nu}^{2} \right) = 0 \hspace{5pt} &\Rightarrow \hspace{5pt} \vlambda_{0} = \mathbf{0}, \hspace{5pt} \vlambda_1 = \| \vz_{\theta} \|_{1} \oslash \vDelta_{\nu}.
\end{align*}
Therefore, the optimal model disturbance is described by 
\begin{align*}
\vxi^{*}_{\theta}(\vu) = 
-\vDelta_{\nu} \sign\big( \vz_{\theta}(\vu) \big) + \vmu_{\nu}
\end{align*}
as $\vz_{\theta} \oslash \| \vz_{\theta}\|_1 = \sign(\vz_{\theta})$. Then the optimal action can be computed by 
\begin{align*}
\vu^{*} = \argmax_{\vu} \nabla_x V^{\top} \left[\va(\vxi^{*}_{\theta}(\vu)) + \mB\left(\vxi^{*}_{\theta}(\vu)\right) \vu \right] - g_c(\vu).
\end{align*}
Due to the envelope theorem, the extrema is described by
\begin{align*}
\mB(\vx; \theta + \vxi^{*}(\vu))^{\top} \nabla_x V - g_{c}(\vu) = 0.
\end{align*}
This expression cannot be solved without approximation as $\mB$ does not necessarily be invertible w.r.t. $\theta$. Approximating $\mB(\vx; \theta + \vxi^{*}(\vu)) \approx \mB(\vx; \theta)$, lets one solve for $\vu$. In this case the optimal action $\vu^{*}$ is described by  
$\vu^{*} = \nabla \tilde{g}(\mB(\vx; \theta)^{\top} \nabla_x V)$. This approximation is feasible for two reasons. First of all, if the adversary can significantly alter the dynamics in each step, the system would not be controllable and the optimal policy would not be able to solve the task. Second, this approximation implies that neither agent or the adversary can react to the action of the other and must choose simultaneously. This assumption is common in prior works \cite{bansal2017hamilton}. The order of the minimization and maximization is interchangeable. For both cases the optimal action as well as optimal model disturbance are identical and require the same approximation during the derivation.  
\hspace{\fill}\mbox{\qed}

\medskip
\subsection{Observation Disturbance Proof}
\begin{theorem*}
For the adversarial observation disturbance (\Eqref{eq:dyn_adv_obs}) with bounded signal energy (\Eqref{eq:signal_energy}), smooth drift and control matrix (i.e., $\va, \mB \in C^{1}$) and $\mB(\vx + \vxi_{o}) \approx \mB(\vx)$, the optimal continuous time policy $\pi^{k}$ and observation disturbance~$\vxi^{k}_{o}$ is described by
\begin{gather*}
\pi^{*}(\vx) = \nabla \tilde{g} \left(\mB(\vx)^{\top} \nabla_x V^{*} \right) \hspace{30pt}
\vxi^{*}_{o} = - \alpha \frac{\vz_{o}}{\| \vz_{o} \|_2} \\
\text{with} \hspace{5pt} \vz_{o} = \left( \frac{\partial \va(\vx; \: \theta)}{\partial \vx} + \frac{\partial \mB(\vx; \: \theta)}{\partial \vx} \vu^{*} \right)^{\top} V^{*}_{x}.
\end{gather*}
\end{theorem*}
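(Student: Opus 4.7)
The plan is to mimic the proofs of Theorems 2.1 and 2.4, since the observation disturbance is structurally a hybrid: the admissible set is the signal-energy ball of Theorem 2.1, but the adversary enters $\va$ and $\mB$ nonlinearly as in Theorem 2.4. First, I would replace the infimum over the admissible set in the HJI by the explicit constraint $\tfrac{1}{2}(\vxi_o^\top \vxi_o - \alpha^2)\leq 0$ and substitute the separable reward together with the observation-perturbed dynamics~\eqref{eq:dyn_adv_obs}. This gives
\begin{align*}
\rho V^* = \max_{\vu}\min_{\vxi_o}\Big[\va(\vx+\vxi_o)+\mB(\vx+\vxi_o)\vu\Big]^\top \nabla_x V^* - g_c(\vu)+q_c(\vx),
\end{align*}
which I would rearrange so that the terms depending on $\vu$ alone, on $\vxi_o$ alone, and on both are visible, in preparation for a nested optimization in the same spirit as the model-disturbance proof.

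Next I would tackle the inner minimization over $\vxi_o$ by writing its Lagrangian $\mathcal{L}(\vxi_o,\lambda) = \nabla_x V^{*\top}[\va(\vx+\vxi_o)+\mB(\vx+\vxi_o)\vu] + \tfrac{\lambda}{2}(\vxi_o^\top\vxi_o - \alpha^2)$ and invoking the KKT conditions. Using smoothness of $\va$ and $\mB$ together with the chain rule $\partial \va(\vx+\vxi_o)/\partial \vxi_o = \partial \va/\partial \vx|_{\vx+\vxi_o}$, stationarity yields $\vz_o + \lambda\,\vxi_o = 0$, i.e.\ $\vxi_o^* = -\vz_o/\lambda$. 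Primal feasibility combined with complementary slackness then forces the constraint to be active, giving $\lambda = \|\vz_o\|_2/\alpha$ and therefore $\vxi_o^* = -\alpha\,\vz_o/\|\vz_o\|_2$, which matches the claimed bang form of Theorem 2.1 once $\nabla_x V^*$ is replaced by the projected quantity $\vz_o$.

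For the outer maximization over $\vu$, I would substitute $\vxi_o^*(\vu)$ back in and apply the envelope theorem exactly as in the proof of Theorem 2.4, so that the optimality condition reduces to $\mB(\vx+\vxi_o^*)^\top \nabla_x V^* - \nabla g_c(\vu) = 0$. Invoking the stated approximation $\mB(\vx+\vxi_o)\approx \mB(\vx)$ then decouples the action from the adversary and yields the closed-form policy $\vu^* = \nabla \tilde g_c(\mB(\vx)^\top \nabla_x V^*)$ via invertibility of $\nabla g_c$.

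The main obstacle is the nonlinearity of $\va$ and $\mB$ in $\vxi_o$: in general the Jacobian $\partial \va/\partial \vx$ should be evaluated at $\vx+\vxi_o^*$ rather than at $\vx$, so the stationarity equation is implicit in $\vxi_o^*$. Getting to the stated clean form requires linearizing the drift around $\vxi_o=0$ (analogous to the linearization implicitly used in Theorem 2.4) and exploiting the stated $\mB(\vx+\vxi_o)\approx\mB(\vx)$ assumption to eliminate the coupling between $\vu$ and $\vxi_o$ that would otherwise obstruct the envelope-theorem step. I would justify this either by assuming $\alpha$ is small relative to the curvature of $\va$ and $\mB$, or by invoking the simultaneous-move argument used in the model-disturbance proof so that neither player reacts to the other within a decision step.
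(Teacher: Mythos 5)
Your proposal follows essentially the same route as the paper's proof: explicit energy constraint in place of the infimum, KKT conditions on the inner minimization to obtain $\vxi_{o}^{*} = -\alpha\,\vz_{o}/\|\vz_{o}\|_{2}$, and the envelope theorem plus the approximation $\mB(\vx+\vxi_{o})\approx\mB(\vx)$ to decouple the outer maximization and recover $\vu^{*} = \nabla\tilde{g}_c(\mB(\vx)^{\top}\nabla_x V^{*})$. Your closing remark about the Jacobians properly being evaluated at $\vx+\vxi_{o}^{*}$ rather than at $\vx$ is in fact slightly more careful than the paper, which silently evaluates them at $\vx$; otherwise the two arguments coincide, including the simultaneous-move justification for the approximation.
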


\medskip
\noindent
\emph{Proof.} The \ac{hji} (\Eqref{eq:hji}) can be written with the explicit constraint instead of the infimum with the admissible set $\Omega_{A}$. The constrained optimization with an bounded energy constraint is described by 
\begin{gather*}
\rho \: V^{*}(\vx) = \max_{\vu} \min_{\vxi_x} \:\: r(\vx, \vu) \:\: + f_c(\vx, \vu, \vxi_{o})^{\top} \: \nabla_{\vx} V^{*}, \\
\text{with} \hspace{5pt} \frac{1}{2}\left(\vxi_{o}^{\top} \vxi_{o} -\alpha^{2}\right) \leq \mathbf{0}.
\end{gather*}
Substituting the dynamics model (\Eqref{eq:dyn_adv_obs}), the reward function as well as abbreviating $\mB(\vx + \vxi_{o}; \theta)$ as $\mB_{\xi}$ yields
\begin{align*}
\rho V^{*} &= \max_{\vu} \min_{\vxi_{\xi}} \:\: q_c(\vx) - g_c( \vu) \:\: + \left[\va_{\xi} + \mB_{\xi} \vu \right]^{\top} \: \nabla_x V^{*}, \\
&= \max_{\vu} \min_{\vxi_x} \left[\left[\va_{\xi} + \mB_{\xi} \vu \right]^{\top} \: \nabla_x V^{*} - g_c(\vu) \right]  
+  q_c.
\end{align*}
Similar to the model disturbance, this nested max-min optimization can be solved by first solving the inner optimization w.r.t. to $\xi$ and substituting this solution into the outer maximization. The Lagrangian for the optimal model disturbance is described by
\begin{align*}
\vxi^{*}_{o} = \argmin_{\vxi_{o}} \: \nabla_x V^{\top} \big(\va(\vxi_{o}) + \mB(\vxi_{o}) \vu \big) + \frac{\lambda}{2} \left(\vxi_{o}^{\top} \vxi_{o} - \alpha^{2}\right)
\end{align*}
Using the KKT conditions this optimization can be solved. The stationarity condition yields
\begin{gather*}
\vz_{o} + \lambda \: \vxi_{o}  = 0  \hspace{15pt} \Rightarrow \hspace{15pt} \vxi^{*}_{o} = - \frac{1}{\lambda} \vz_{o} \\
\text{with} \hspace{5pt} \vz_{o} =  \left[ \frac{\partial \va(\vx; \: \theta)}{\partial \vx} + \frac{\partial \mB(\vx; \: \theta)}{\partial \vx} \vu^{*} \right]^{\top} \nabla_x V.
\end{gather*}
The primal feasibility and the complementary slackness yield
\begin{align*}
\frac{1}{2} \left(\frac{1}{\lambda^{2}} \vz_{o}^{\top}\vz_{o} - \alpha^{2} \right) \leq 0 \hspace{10pt} &\Rightarrow \hspace{10pt} 
\lambda  \geq \frac{1}{\alpha} \| \vz_{\theta} \|_{2}  \\
\frac{\lambda}{2}  \left(\frac{1}{\lambda^{2}} \vz_{o}^{\top} \vz_{o}  - \alpha^{2} \right) = 0 \hspace{10pt} &\Rightarrow \hspace{10pt} \lambda_{0} = 0, \hspace{5pt} \lambda_1 = \frac{1}{\alpha} \| \vz_{o} \|_{2}.
\end{align*}
Therefore, the optimal observation disturbance is described by 
\begin{align*}
\vxi^{*}_{o}(\vu) = -\alpha \frac{\vz_{o}}{\| \vz_{o} \|_2}.
\end{align*}
Then the optimal action can be computed by 
\begin{align*}
\vu^{*} = \argmax_{\vu} \nabla_x V^{\top} \left[\va\big(\vxi^{*}_{o}(\vu)\big) + \mB\left(\vxi^{*}_{o}(\vu)\right) \vu \right] - g_c(\vu).
\end{align*}
Due to the envelope theorem, the extrema is described by
\begin{align*}
\mB(\vx; \theta + \vxi^{*}_{o}(\vu))^{\top} \nabla_x V - g_{c}(\vu) = 0.
\end{align*}
This expression cannot be solved without approximation as $\mB$ does not necessarily be invertible w.r.t. $\vx$. Approximating $\mB(\vx + \vxi^{*}_{o}(\vu); \: \theta) \approx \mB(\vx; \theta)$, lets one solve for $\vu$. In this case the optimal action $\vu^{*}$ is described by  
$\vu^{*} = \nabla \tilde{g}(\mB(\vx; \theta)^{\top} \nabla_x V)$. This approximation is feasible for two reasons. First of all, if the adversary can significantly alter the dynamics in each step, the system would not be controllable and the optimal policy would not be able to solve the task. Second, this approximation implies that neither agent or the adversary can react to the action of the other and must choose simultaneously. This assumption is common in prior works \cite{bansal2017hamilton}. The order of the minimization and maximization is interchangeable. For both cases the optimal action as well as optimal model disturbance are identical and require the same approximation during the derivation. \hspace{\fill}\mbox{\qed}

\end{document}